\title{Vocabulary In-Context Learning in Transformers: Benefits of Positional Encoding}
\author{%
  Qian Ma \textsuperscript{1}\quad 
  Ruoxiang Xu \textsuperscript{1}\quad 
  Yongqiang Cai \textsuperscript{1}\thanks{
	Email: caiyq.math@bnu.edu.cn.
	The first two authors made equal contributions to the paper.
	} 
  \\
  \textsuperscript{1}School of Mathematical Sciences, Beijing Normal University\\
}
\newtheorem{theorem}{Theorem}
\newtheorem{proposition}[theorem]{Proposition}
\newtheorem{lemma}[theorem]{Lemma}
\newtheorem{corollary}[theorem]{Corollary}
\newtheorem{assumption}[theorem]{Assumption}
\newtheorem{remark}[theorem]{Remark}
\newenvironment{theoremthis}[1]
  {%
   \addtocounter{theorem}{-1}%
   \begin{theorem}}
  {\end{theorem}}
\newenvironment{lemmathis}[1]
  {%
   \addtocounter{theorem}{-1}%
   \begin{lemma}}
  {\end{lemma}}
\newcommand{\changes}[1]{\textcolor{black}{#1}}
\begin{document}

\maketitle

\begin{abstract}\label{abstract}
  Numerous studies have demonstrated that the Transformer architecture possesses the capability for in-context learning (ICL). In scenarios involving function approximation, context can serve as a control parameter for the model, endowing it with the universal approximation property (UAP). In practice, context is represented by tokens from a finite set, referred to as a vocabulary, which is the case considered in this paper, \emph{i.e.}, vocabulary in-context learning (VICL). We demonstrate that VICL in single-layer Transformers, without positional encoding, does not possess the UAP; however, it is possible to achieve the UAP when positional encoding is included. Several sufficient conditions for the positional encoding are provided. Our findings reveal the benefits of positional encoding from an approximation theory perspective in the context of ICL.
\end{abstract}

\section{Intruduction}\label{introduction}

Transformers have emerged as a dominant architecture in deep learning over the past few years. Thanks to their remarkable performance in language tasks, they have become the preferred framework in the natural language processing (NLP) field. A major trend in modern NLP is the development and integration of various black-box models, along with the construction of extensive text datasets. In addition, improving model performance in specific tasks through techniques such as in-context learning (ICL) \cite{Dong2024Survey, Brown2020Languagea}, chain of thought (CoT) \cite{Wei2022ChainThought, Chu2024Navigatea}, and retrieval-augmented generation (RAG) \cite{Gao2024RetrievalAugmented} has become a significant research focus. While the practical success of these models and techniques is well-documented, the theoretical understanding of why they perform so well remains incomplete.

To explore the capabilities of Transformers in handling ICL tasks, it is essential to examine their approximation power. The universal approximation property (UAP) \cite{Cybenko1989Approximation, Hornik1989Multilayera, Hornik1991Approximationb, Leshno1993Multilayera} has long been a key topic in the theoretical study of neural networks (NNs), with much of the focus historically on feed-forward neural networks (FNNs). \citet{Yun2019Are} was the first to investigate the UAP of Transformers, demonstrating that any sequence-to-sequence function could be approximated by a Transformer network with fixed positional encoding. \citet{Shengjie2022Your} highlighted that a Transformer with relative positional encoding does not possess the UAP. Meanwhile, \citet{Petrov2024Promptinga} explored the role of prompting in Transformers, proving that prompting a pre-trained Transformer can act as a universal functional approximator.

However, one limitation of these studies is that, in practical scenarios, the inputs to language models are derived from a finite set embedded in high-dimensional Euclidean space -- commonly referred to as a vocabulary. Whether examining the work on prompts in \cite{Petrov2024Promptinga} or the research on ICL in \cite{Ahn2023Transformersa, Cheng2024Transformers}, these studies assume inputs from the entire Euclidean space, which differs significantly from the discrete nature of vocabularies used in real-world applications.

\subsection{Contributions}\label{contribution}

Starting with the connection between FNNs and Transformers, we turn to the finite restriction of vocabularies and study the benefits of positional encoding. Leveraging the UAP of FNNs, we explore the UAP of Transformers for ICL tasks in two scenarios: one where positional encoding is used and one where it is not. In both cases, the inputs are from a finite vocabulary. More specifically:
\begin{enumerate}
  \item We first establish a connection between FNNs and Transformers in processing ICL tasks (Lemma~\ref{lem:trans_as_NN}). Using this lemma, we show that Transformers can function as universal approximators (Lemma~\ref{lem:trans_UAP_infinity}), where the context serves as control parameters, while the weights and biases of the Transformer remain fixed.
  \item When the vocabulary is finite and positional encoding is not used, we prove that single-layer Transformers cannot achieve the UAP for ICL tasks (Theorem~\ref{thm:trans_nonUAP}).
  \item However, when positional encoding is used, it becomes possible for single-layer Transformers to achieve the UAP (Theorem~\ref{thm:trans_UAP}). In particular, for Transformers with ReLU or softmax activation functions, the conditions on the positional encoding are relaxed (Theorem~\ref{thm:trans_UAP_dense_in_compact_set_informal}).
\end{enumerate}

\subsection{Related Works}

\paragraph{Universal approximation property.}
NNs, through multi-layer nonlinear transformations and feature extraction, are capable of learning deep feature representations from raw data. As neural networks gain prominence, theoretical investigations—especially into their UAP -- have intensified. Related studies typically fall into two categories: those allowing arbitrary width with fixed depth~\cite{Cybenko1989Approximation,Hornik1989Multilayera,Hornik1991Approximationb,Leshno1993Multilayera}, and those allowing arbitrary depth with bounded width~\cite{Lu2017Expressive,Park2021Minimum,Cai2023Achieve,li2024minimum,Cai2024Vocabulary}. Since our study builds on existing results regarding the approximation capabilities of FNNs, we focus on investigating the approximation abilities of single-layer Transformers in modulating context for ICL tasks. Consequently, our work relies more on the findings from the first category of research. The realization of the UAP depends on the architecture of the network itself, providing constructive insights for exploring the connection between FNNs and Transformers. Recently, \citet{Petrov2024Promptinga} also explored UAP in the context of ICL, but without considering vocabulary constraints or positional encodings.

\paragraph{Transformers.}
The Transformer is a widely used neural network architecture for modeling sequences \cite{Vaswani2017Attentiona, Devlin2019BERTa, Yang2019XLNet, Raffel2020Exploring, Lan2020ALBERT, Liu2020Learninga}. This non-recurrent architecture relies entirely on the attention mechanism to capture global dependencies between inputs and outputs \cite{Vaswani2017Attentiona}. The neural sequence transduction model is typically structured using an encoder-decoder framework \cite{Bahdanau2016Neural, Sutskever2014Sequencea}.

Without positional encoding, the Transformer can be viewed as a stack of $N$ blocks, each consisting of a self-attention layer followed by a feed-forward layer with skip connections. In this paper, we focus on the case of a single-layer self-attention sequence encoder.

\paragraph{In-context learning.}
The Transformer has demonstrated remarkable performance in the field of NLP, and large language models (LLMs) are gaining increasing popularity. ICL has emerged as a new paradigm in NLP, enabling LLMs to make better predictions through prompts provided within the context \cite{Brown2020Languagea, Chowdhery2023PaLMa, Touvron2023LLaMAa, Openai2024Gpt4, Xun2017Surveya}. ICL delivers high performance with high-quality data at a lower cost \cite{Wang2021Wanta, Khorashadizadeh2023Exploringa, Ding2023GPT3a}. It enhances retrieval-augmented methods by prepending grounding documents to the input \cite{Ram2023Contexta} and can effectively update or refine the model’s knowledge base through well-designed prompts \cite{DeCao2021Editinga}.

\paragraph{Positional Encoding.}
The following explanation clarifies the significance of incorporating positional encoding into the Transformer architecture. RNNs capture sequential order by encoding the changes in hidden states over time. In contrast, for Transformers, the self-attention mechanism is permutation equivariant, meaning that for any model $f$, any permutation matrix $\pi $, and any input $x$, the following holds: $f(\pi(x)) = \pi(f(x))$.

We aim to explore the impact of positional encoding on the performance of a single-layer Transformer when performing ICL tasks with a finite vocabulary. Therefore, we focus on analyzing existing positional encoding methods. There are fundamental methods for encoding positional information in a sequence within the Transformer:
absolute positional encodings (APEs), \emph{e.g.} \cite{He2020DEBERTAa, Liu2020Learninga, Wang2020Positiona, Ke2020Rethinkinga},
relative positional encodings (RPEs), \emph{e.g.} \cite{Shaw2018SelfAttentiona, Dai2019TransformerXLa, Ke2020Rethinkinga} and rotary positional embedding (RoPE) \cite{Su2024Roformer}.
The commonly used APE is implemented by directly adding the positional encodings to the word embeddings, and we follow this implementation.

\paragraph{UAP of ICL.}
To understand the mechanism of ICL, various explanations have been proposed, including those based on Bayesian theory \cite{Xie2021Explanation, Wang2023Large} and gradient descent theory \cite{Dai2023Why}. Fine-tuning the Transformer through ICL alters the presentation of the input rather than the model parameters, which is driven by successful few-shot and zero-shot learning \cite{Wei2021Finetuned, Kojima2022Large}. This success raises the question of whether we can achieve the UAP through context adjustment.

\citet{Yun2019Are} demonstrated that Transformers can serve as universal sequence-to-sequence approximators, while \citet{Alberti2023Sumformer} extended the UAP to architectures with non-standard attention mechanisms. However, their implementations allow the internal parameters of the Transformers to vary, which does not fully capture the nature of ICL. In contrast, \citet{Likhosherstov2021Expressive} showed that while the parameters of self-attention remain fixed, various sparse matrices can be approximated by altering the inputs. Fixing self-attention parameters aligns more closely with practical scenarios and provides valuable insights for our work. However, this approach has the limitation of excluding the full Transformer architecture. Furthermore, \citet{Deora2023Optimization} illustrated the convergence and generalization of single-layer multi-head self-attention models trained using gradient descent, supporting the feasibility of our research by emphasizing the robust generalization of Transformers. Nevertheless, \citet{Petrov2023Whena} indicated that the presence of a prefix does not alter the attention focus within the context, prompting us to explore variations in input context and introduce flexibility in positional encoding.

\subsection{Outline}

We will introduce the notations and background results in Section~\ref{sec:2}. Section~\ref{sec:3} addresses the case where the vocabulary is finite and positional encoding is not used. Section~\ref{sec:4} discusses the benefits of using positional encoding. A summary is provided in Section~\ref{sec:5}. All proofs of lemmas and theorems are provided in the Appendix.

\section{Background Materials}\label{sec:2}

We consider the approximation problem as follows. Given a fixed Transformer network, for any target continuous function $f: \mathcal{K} \to \mathbb{R}^{d_y}$ with a compact domain $\mathcal{K} \subset \mathbb{R}^{d_x}$, we aim to adjust the content of the context so that the output of the Transformer network can approximate $f$. First, we present the concrete forms and notations for the inputs of ICL, FNNs, and Transformers.

\subsection{Notations}
\paragraph{Input of in-context learning.}
In the ICL task, the given $n$ demonstrations are denoted as $z^{(i)} = (x^{(i)}, y^{(i)})$ for $i=1, 2, ..., n$, where $x^{(i)} \in \mathbb{R}^{d_x}$ and $y^{(i)} \in \mathbb{R}^{d_y}$.  Unlike the setting in \cite{Ahn2023Transformersa, Cheng2024Transformers} where $y^{(i)}$ was related to $x^{(i)}$ (for example $y^{(i)} = \phi (x^{(i)})$ for some function $\phi$), we do not assume any correspondence between $x^{(i)}$ and $y^{(i)}$, \emph{i.e.}, $x^{(i)}$ and $y^{(i)}$ are chosen freely. To predict the target at a query vector $x \in \mathbb{R}^{d_x}$ or $z = (x, 0) \in \mathbb{R}^{d_x + d_y}$, we define the input matrix $Z$ as follows:
\begin{equation}
  Z = \begin{bmatrix}
    z^{(1)} & z^{(2)} & \cdots & z^{(n)} & z
  \end{bmatrix}
  := \begin{bmatrix}
    x^{(1)} & x^{(2)} & \cdots & x^{(n)} & x \\
    y^{(1)} & y^{(2)} & \cdots & y^{(n)} & 0
  \end{bmatrix}
  \in \mathbb{R}^{(d_x + d_y) \times (n + 1)}.
\end{equation}
Furthermore, let $\mathcal{P}: \mathbb{N}^+ \to \mathbb{R}^{d_x + d_y}$ represent a positional encoding function, and define $\mathcal{P}^{(i)} := \mathcal{P}(i)$. Denote the demonstrations with positional encoding as $z^{(i)}_\mathcal{P} := z^{(i)} + \mathcal{P}^{(i)}$ and $z_\mathcal{P} := z + \mathcal{P}^{(n + 1)}$. The context with positional encoding can then be represented as:
\begin{equation}
  Z_\mathcal{P}
  = \begin{bmatrix}
    z^{(1)}_\mathcal{P} & z^{(2)}_\mathcal{P} & \cdots & z^{(n)}_\mathcal{P} & z_\mathcal{P}
  \end{bmatrix}
  := \begin{bmatrix}
    x^{(1)}_\mathcal{P} & x^{(2)}_\mathcal{P} & \cdots & x^{(n)}_\mathcal{P} & x_\mathcal{P} \\
    y^{(1)}_\mathcal{P} & y^{(2)}_\mathcal{P} & \cdots & y^{(n)}_\mathcal{P} & y_\mathcal{P}
  \end{bmatrix}
  \in \mathbb{R}^{(d_x + d_y) \times (n + 1)}.
\end{equation}
Additionally, we denote:
\begin{align}
   & X = \begin{bmatrix} x^{(1)}  & x^{(2)} & \cdots & x^{(n)}  \end{bmatrix} \in \mathbb{R}^{d_x \times n},
   & X_\mathcal{P} = \begin{bmatrix} x^{(1)}_\mathcal{P} & x^{(2)}_\mathcal{P} & \cdots & x^{(n)}_\mathcal{P} \end{bmatrix} \in \mathbb{R}^{d_x \times n}, \\
   & Y = \begin{bmatrix} y^{(1)} & y^{(2)} & \cdots & y^{(n)} \end{bmatrix} \in \mathbb{R}^{d_y \times n},
   & Y_\mathcal{P} = \begin{bmatrix} y^{(1)}_\mathcal{P} & y^{(2)}_\mathcal{P} & \cdots & y^{(n)}_\mathcal{P} \end{bmatrix} \in \mathbb{R}^{d_y \times n}.
\end{align}
\paragraph{Feed-forward neural networks.}
One-hidden-layer FNNs have sufficient capacity to approximate continuous functions on any compact domain. In this article, all the FNNs we refer to and use are one-hidden-layer networks. We denote a one-hidden-layer FNN with activation function $\sigma$ as $\mathrm{N}^{\sigma}$, and the set of all such networks is denoted as $\mathcal{N}^{\sigma}$, \emph{i.e.},
\begin{equation}
  \begin{aligned}
    \mathcal{N}^{\sigma}
     & = \big\{\mathrm{N}^{\operatorname{\sigma}}
    := A \operatorname{\sigma}(W x + b) ~\big |~
    A \in \mathbb{R}^{d_y \times k},\
    W \in \mathbb{R}^{k \times d_x},\
    b \in \mathbb{R}^{k},\
    k \in \mathbb{N}^+ \big\}                      \\
     & = \bigg\{\mathrm{N}^{\operatorname{\sigma}}
    := \sum_{i=1}^k a_i \sigma(w_i \cdot x + b_i) ~\bigg |~
    (a_i, w_i, b_i) \in \mathbb{R}^{d_y} \times \mathbb{R}^{d_x}
    \times \mathbb{R},\ k \in \mathbb{N}^+ \bigg\}.
  \end{aligned}
\end{equation}
For element-wise activations, such as ReLU, the above notation is well-defined. However, for non-element-wise activation functions, which are not widely used but considered in this article, especially the softmax activation, we need to give more details for the notation:
\begin{equation}
  \mathcal{N}^{\operatorname{softmax}} = \left\{\left.
  \mathrm{N}^{\operatorname{softmax}} =
  \frac{\sum\limits_{i=1}^k a_i \mathrm{e}^{w_i \cdot x + b_i}}
  {\sum\limits_{i=1}^{k} \mathrm{e}^{w_i \cdot x + b_i}}
  ~\right|~ (a_i,w_i,b_i) \in
  \mathbb{R}^{d_y} \times \mathbb{R}^{d_x} \times \mathbb{R},\ k \in \mathbb{N}^+ \right\}.
\end{equation}
\paragraph{Transformers.}
We define the general attention mechanism following \cite{Ahn2023Transformersa, Cheng2024Transformers} as:
\begin{align}
  \operatorname{Attn}_{Q, K, V}^{\sigma}(Z)
  := V Z M \sigma\big((QZ)^\top KZ\big),
\end{align}
where $V,\ Q,\ K$ are the value, query, and key matrices in $\mathbb{R}^{(d_x + d_y) \times (d_x + d_y)}$, respectively. $M = \text{diag}(I_n, 0)$ is the mask matrix in $\mathbb{R}^{(n+1) \times (n+1)}$, and $\sigma$ is the activation function. Note that the context vectors $z^{(i)}$ are asymmetric and do not include the query vector $z$ itself; therefore, we introduce a mask matrix $M$ following the design of \cite{Ahn2023Transformersa, Cheng2024Transformers}. Here the softmax activation of a matrix $G \in\mathbb{R}^{m \times n}$ is defined as:
\begin{align}
  \big(\operatorname{softmax}(G)\big)_{i, j} :=
  \dfrac{\exp\big(G_{i,j}\big)}
  {\sum\limits_{l=1}^{m} \exp\big(G_{l,j}\big)}.
\end{align}
With this formulation of the general attention mechanism, we can define a single-layer Transformer without positional encoding as:
\begin{equation}\label{equ:T_set}
  \mathrm{T}^{\sigma}(x; X, Y)
  := (Z + V Z M \sigma ((QZ)^\top KZ))_{d_x + 1 : d_x + d_y, n + 1},
\end{equation}
where $[a:b, c:d]$ denotes the submatrix from the $a$-th row to the $b$-th row and from the $c$-th column to the $d$-th column. If $a = b$ (or $c = d$), the row (or column) index is reduced to a single number. Similarly to the notation for FNNs, $\mathcal{T}^{\sigma}$ denotes the set of all $\mathrm{T}^{\sigma}$ with different parameters.

\paragraph{Vocabulary.}
In the above notations, the tokens in context of ICL are general and unrestricted. When we refer to a ``vocabulary'', we mean that the tokens are drawn from a finite set. More specifically, we refer to it as VICL if all input vectors $z^{(i)}$ come from a finite vocabulary $\mathcal{V} = \mathcal{V}_x \times \mathcal{V}_y \subset \mathbb{R}^{d_x} \times \mathbb{R}^{d_y}$. In this case, we use subscript $*$, \emph{i.e.}, $\mathrm{T}^{\sigma}_{*}(x; X, Y)$, to represent the Transformer $\mathrm{T}^{\sigma}(x; X, Y)$ defined in equation~(\ref{equ:T_set}), and denote the set of such Transformers as $\mathcal{T}^{\sigma}_*$:
\begin{equation}
  \mathcal{T}^{\sigma}_* = \big\{
  \mathrm{T}^{\sigma}_{*} (x; X, Y) := \mathrm{T}^{\sigma}(x; X, Y) ~\big |~
  z^{(i)} \in {\mathcal{V}},\ i \in \{1, 2, \cdots, n\},\ n \in \mathbb{N}^+ \big\}.
\end{equation}
To facilitate the simplification of VICL analysis, we denote a FNN with a finite set of weights as $\mathrm{N}^{\sigma}_{*}$, and the corresponding set of such networks as $\mathcal{N}^{\sigma}_{*}$. More specifically, when the activation function is an elementwise activation, we denote:
\begin{equation}\label{equ:N_star_set}
  \mathcal{N}^{\sigma}_* = \bigg\{
  \mathrm{N}^{\operatorname{\sigma}}_* := \sum_{i=1}^k a_i \sigma( w_i \cdot x + b_i)
  ~\bigg |~
  (a_i,w_i,b_i)\in \mathcal{A} \times \mathcal{W} \times \mathcal{B},\ k \in \mathbb{N}^+ \bigg\},
\end{equation}
where $\mathcal{A} \subset \mathbb{R}^{d_y},\ \mathcal{W} \subset \mathbb{R}^{d_x}$, and $\mathcal{B} \subset \mathbb{R}$ are finite sets. When the activation function is $\operatorname{softmax}$, we denote:
\begin{equation}
  \mathcal{N}^{\operatorname{softmax}}_{*} = \left\{\left.
  \mathrm{N}^{\operatorname{softmax}} =
  \frac{\sum\limits_{i=1}^k a_i e^{w_i \cdot x + b_i}}
  {\sum\limits_{i=1}^{k} e^{w_i \cdot x + b_i}} ~\right|~
  (a_i,w_i,b_i)\in \mathcal{A} \times \mathcal{W} \times \mathcal{B},\ k \in \mathbb{N}^+ \right\},
\end{equation}
where $\mathcal{A}, \mathcal{W}$ and $\mathcal{B}$ are defined as in the previous context.

\paragraph{Positional encoding.}
When positional encoding $\mathcal{P}$ is involved, we add the subscript $\mathcal{P}$, \emph{i.e.},
\begin{equation}\label{equ:T_star_P_set}
  \mathcal{T}^{\sigma}_{*,\mathcal{P}} = \big\{
  \mathrm{T}^{\sigma}_{*,\mathcal{P}}(x; X, Y)
  := \mathrm{T}^{\sigma} (x_{\mathcal{P}}; X_{\mathcal{P}}, Y_{\mathcal{P}}) ~\big |~
  z^{(i)} \in {\mathcal{V}}, i \in \{1,2,...,n\}, n \in \mathbb{N}^+ \big\}.
\end{equation}
Note that the context length $n$ in $\mathrm{T}^{\sigma}$, $\mathrm{T}^{\sigma}_{*}$ and $\mathrm{T}^{\sigma}_{*,\mathcal{P}}$ are unbounded.

We present all our notations in Table~\ref{tab:1} in Appendix~\ref{app:A} for easy reference.

\subsection{Universal Approximation Property}
The vanilla form of the UAP for FFNs plays a crucial role in our study. Before we state this property as a formal lemma, we put forward the following assumption first, which is similar to the one in \cite{Cheng2024Transformers} and is used to simplify the analysis of Transformers.
\begin{assumption}\label{ass:1}
  The matrices $Q,\ K,\ V \in \mathbb{R}^{(d_x + d_y) \times (d_x + d_y)}$ have the following sparse partition:
  \begin{align}
    \label{equ:QKV_simplify}
    Q = \begin{bmatrix} B & 0 \\ 0 & 0 \end{bmatrix}, \quad
    K = \begin{bmatrix} C & 0 \\ 0 & 0 \end{bmatrix}, \quad
    V = \begin{bmatrix} D & E \\ F & U \end{bmatrix},
  \end{align}
  where $B,\ C,\ D \in \mathbb{R}^{d_x \times d_x}$, $E \in \mathbb{R}^{d_x \times d_y},\ F \in \mathbb{R}^{d_y \times d_x}$ and $U \in \mathbb{R}^{d_y \times d_y}$. Furthermore, the matrices $B,\ C$ and $U$ are non-singular, and the matrix $F = 0$.
\end{assumption}
In addition, we assume the element-wise activation $\sigma$ is non-polynomial, locally bounded, and continuous. It is worth noting that a randomly initialized $n \times n$ matrix is non-singular with probability one, so it is acceptable to assume that the matrices $B, C$ and $U$ are non-singular. Moreover, the assumption $F = 0$ can be relaxed, which will be discussed in Appendix~\ref{app:E}. Here, we have slightly strengthened it for the sake of computational convenience.
\begin{lemma}[UAP of FNNs \cite{Leshno1993Multilayera}]\label{thm:vanilla_UAP}
  Let $\operatorname{\sigma}: \mathbb{R} \to \mathbb{R}$ be a non-polynomial, locally bounded, piecewise continuous activation function. For any continuous function $f: \mathbb{R}^{d_x} \to \mathbb{R}^{d_y}$ defined on a compact domain $\mathcal{K}$, and for any $\varepsilon > 0$, there exist $k \in \mathbb{N}^+$, $A \in \mathbb{R}^{d_y \times k}$, $b \in \mathbb{R}^{k}$, and $W \in \mathbb{R}^{k \times d_x}$ such that
  \begin{equation}
    \| A \sigma(Wx + b) - f(x) \| < \varepsilon, \quad \forall x \in \mathcal{K}.
  \end{equation}
\end{lemma}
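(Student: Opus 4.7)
The strategy is to establish density of the linear span
\begin{equation}
\mathcal{S} = \mathrm{span}\{\sigma(w \cdot x + b) : w \in \mathbb{R}^{d_x},\ b \in \mathbb{R}\}
\end{equation}
in $C(\mathcal{K})$ under the uniform norm, and then bootstrap to vector-valued targets. My plan proceeds by three successive reductions: scalar output, univariate input via ridge approximation, and finally reduction of the univariate density to polynomial approximation.

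The reduction to $d_y=1$ is routine: approximate each coordinate of $f$ by a width-$k_i$ network, then stack the hidden units and use a block row read-out for $A$. For the reduction $d_x \to 1$, I would first show that if $\mathcal{S}$ is dense in $C(I)$ for every compact interval $I\subset\mathbb{R}$, then every ridge function $g(w\cdot x)$ with $g\in C(\mathbb{R})$ can be approximated uniformly on $\mathcal{K}$ by elements of $\mathcal{S}$; the remaining ingredient is the classical fact that finite sums of ridge functions are dense in $C(\mathcal{K})$, which one can prove by applying the Stone--Weierstrass theorem to the algebra generated by the trigonometric ridges $\cos(w\cdot x)$ and $\sin(w\cdot x)$.

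The heart of the argument is the one-dimensional case. I would first mollify: pick a compactly supported $C^\infty$ bump $\phi$ and set $\tilde\sigma = \sigma * \phi$. Local boundedness and piecewise continuity of $\sigma$ allow $\tilde\sigma(wt+b)$ to be written as a uniform limit on compact sets of Riemann sums of translates $\sigma(w t + b')$, so $\tilde\sigma(wt+b)$ lies in the uniform closure of $\mathcal{S}$. Because $\sigma$ is non-polynomial, a suitable choice of $\phi$ makes $\tilde\sigma$ smooth and not polynomial on any open interval. Applying the Baire category theorem to the closed sets $E_k = \{t : \tilde\sigma^{(k)}(t) = 0\}$ then produces a point $b_0$ at which $\tilde\sigma^{(k)}(b_0)\neq 0$ for every $k\geq 0$; otherwise some $E_k$ would have nonempty interior, forcing $\tilde\sigma$ to be a polynomial on that interval.

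Once such a $b_0$ is secured, the symmetric finite differences
\begin{equation}
\frac{1}{h^k}\sum_{j=0}^{k}(-1)^{k-j}\binom{k}{j}\tilde\sigma\!\left(jh\,t + b_0\right) \longrightarrow \tilde\sigma^{(k)}(b_0)\,t^k \quad (h\to 0)
\end{equation}
converge uniformly on any compact interval, showing that every monomial $t^k$ lies in the uniform closure of $\mathcal{S}$; the Weierstrass approximation theorem then delivers density in $C(I)$. The main obstacle is the extraction of a single $b_0$ with all derivatives nonzero from the bare hypothesis ``non-polynomial'': this Baire-category step is what separates the Leshno characterization from earlier UAP proofs requiring stronger structural assumptions (sigmoidal, bounded, monotone), and it is also what forces the mollification detour so that one can legitimately differentiate $\tilde\sigma$ to all orders.
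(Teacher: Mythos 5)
This lemma is not proved in the paper at all: it is imported from \cite{Leshno1993Multilayera} as a known background result, so there is no in-paper argument to compare against. Your proposal is, in substance, the standard proof of that cited theorem (in the form presented in Pinkus's survey): reduce to scalar outputs, reduce to one variable via density of sums of ridge functions (Stone--Weierstrass applied to the algebra generated by trigonometric ridges), mollify so that derivatives exist, locate a point where no derivative of $\tilde\sigma$ vanishes, and recover all monomials by divided differences. That architecture is correct and is exactly how the original result is established.

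One step, as written, does not go through. You claim that a suitable mollifier makes $\tilde\sigma=\sigma*\phi$ ``not polynomial on any open interval,'' and then conclude from a single application of Baire that some $b_0$ has $\tilde\sigma^{(k)}(b_0)\neq 0$ for all $k$. The quoted claim is false in general: for $\sigma=\operatorname{ReLU}$ (which satisfies all hypotheses), $\sigma*\phi$ with compactly supported $\phi$ is affine outside a bounded interval, so $E_2$ already has nonempty interior and your intended contradiction evaporates. The correct statement is weaker --- since $\tilde\sigma$ is not a polynomial globally, there is \emph{some} bounded open interval $\Lambda$ on which it is not a polynomial --- and on $\Lambda$ one needs the full Donoghue/Corominas--Sunyer i Balaguer lemma: if every point of $\Lambda$ has some vanishing derivative, then $\tilde\sigma$ is a polynomial on all of $\Lambda$. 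A single Baire application only yields that $\tilde\sigma$ is a polynomial on a subinterval of $\Lambda$, which is not yet a contradiction; the classical proof iterates the category argument on the closed set of ``non-polynomial points.'' You correctly identify this step as the crux of the Leshno characterization, but the one-line justification you give is precisely the part that requires the work. Separately, the existence of a $\phi$ with $\sigma*\phi$ non-polynomial also deserves an argument (if $\sigma*\phi$ were a polynomial of degree at most $k$ for every test function $\phi$, a distributional argument forces $\sigma$ itself to be a polynomial a.e.); you assert it without proof.
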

The theorem presented above is well-known and primarily applies to activation functions operating element-wise. However, it can be readily extended to the case of the softmax activation function. In fact, this can be achieved using NNs with exponential activation functions. The specific approach for this generalization is detailed in Appendix~\ref{app:B}.

\subsection{Feed-forward neural networks and Transformers}
It is important to emphasize the connection between FNNs and Transformers, which will be represented in the following lemmas and are crucial for establishing our main theory.

\begin{lemma}\label{lem:trans_as_NN}
  Let $\sigma: \mathbb{R} \to \mathbb{R}$ be a non-polynomial, locally bounded, piecewise continuous activation function, and $\mathrm{T}^{\sigma}$ be a single-layer Transformer satisfying Assumption~\ref{ass:1}. For any one-hidden-layer network $\mathrm{N}^{\sigma}: \mathbb{R}^{d_x-1} \to \mathbb{R}^{d_y} \in \mathcal{N}^{\operatorname{\sigma}}$ with $n$ hidden neurons, there exist matrices $X \in \mathbb{R}^{d_x \times n}$ and $Y \in \mathbb{R}^{d_y \times n}$ such that
  \begin{equation}
    \label{equ:trans_as_NN}
    \mathrm{T}^{\sigma}\left(\tilde x; X, Y\right) = \mathrm{N}^{\sigma}(x),
    \quad
    \forall x \in \mathbb{R}^{d_x-1}.
  \end{equation}
\end{lemma}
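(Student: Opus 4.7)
The plan is to unfold the Transformer formula under Assumption~\ref{ass:1} and show that a single-layer attention head, when fed the query and context in a particular form, reduces to exactly the hidden-layer expansion $\sum_i a_i \sigma(w_i \cdot x + b_i)$ that defines $\mathrm{N}^{\sigma}$. The key is that the nonsingular blocks $B$, $C$, and $U$ give us three independent degrees of freedom to match the input weights, the biases, and the output weights of the FNN, while an augmented query vector $\tilde x = (x^\top, 1)^\top \in \mathbb{R}^{d_x}$ supplies the extra coordinate needed to realize biases.

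First I would carry out the block computation. With $Q$, $K$, $V$ as in Assumption~\ref{ass:1}, one gets
\begin{equation}
QZ = \begin{bmatrix} BX & B\tilde x \\ 0 & 0 \end{bmatrix}, \qquad
KZ = \begin{bmatrix} CX & C\tilde x \\ 0 & 0 \end{bmatrix}, \qquad
VZ = \begin{bmatrix} DX+EY & D\tilde x \\ UY & 0 \end{bmatrix},
\end{equation}
using $F=0$ and the fact that the query token has zero $y$-part. The $(n{+}1)$-th column of $VZM$ vanishes because of the mask, so only the first $n$ columns contribute to the last column of the attention output. Reading off rows $d_x+1,\ldots,d_x+d_y$ of column $n+1$ of $Z + VZM\sigma((QZ)^\top KZ)$, the $Z$-contribution is zero (again by the query's zero $y$-part), and the attention part collapses to
\begin{equation}
\mathrm{T}^{\sigma}(\tilde x; X, Y) \;=\; \sum_{j=1}^{n} U y^{(j)}\, \sigma\!\left( (B x^{(j)})^\top C \tilde x \right).
\end{equation}

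Next I would match this expression with an arbitrary $\mathrm{N}^{\sigma}(x) = \sum_{j=1}^{n} a_j \sigma(w_j \cdot x + b_j)$. Setting $\tilde x := (x^\top, 1)^\top$, I want
\begin{equation}
(B x^{(j)})^\top C \tilde x \;=\; w_j \cdot x + b_j, \qquad U y^{(j)} \;=\; a_j.
\end{equation}
The first condition is equivalent to $C^\top B x^{(j)} = (w_j^\top, b_j)^\top \in \mathbb{R}^{d_x}$, which is solvable for $x^{(j)}$ since $B$ and $C$ are nonsingular; the second is solved by $y^{(j)} := U^{-1} a_j$ since $U$ is nonsingular. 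Plugging these $x^{(j)}$ and $y^{(j)}$ back in yields the desired identity for all $x \in \mathbb{R}^{d_x-1}$.

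There is no real obstacle here beyond bookkeeping: the essential ingredients are (i) correctly using the mask $M$ to kill the query's self-contribution so that the attention output becomes a clean sum over context tokens, (ii) recognizing the bilinear form $(Bx^{(j)})^\top C\tilde x$ as giving affine pre-activations once $\tilde x$ is appropriately augmented, and (iii) invoking nonsingularity of $B$, $C$, $U$ to invert the three linear systems. The only subtlety worth flagging is that the lemma maps $\mathbb{R}^{d_x-1}$ into $\mathbb{R}^{d_x}$ via the $1$-padding of $\tilde x$, so the construction uses one ``extra'' coordinate of the Transformer's input space to encode biases; a brief remark confirming that any $(w_j,b_j)$ is realizable this way completes the argument.
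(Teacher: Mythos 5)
Your proposal is correct and follows essentially the same route as the paper: both unfold the attention block under Assumption~\ref{ass:1} to obtain $\mathrm{T}^{\sigma}(\tilde x;X,Y)=UY\,\sigma(X^\top B^\top C\tilde x)$ and then invert the nonsingular blocks, your columnwise choice $x^{(j)}=(C^\top B)^{-1}(w_j^\top,b_j)^\top$, $y^{(j)}=U^{-1}a_j$ being exactly the paper's $X=(C^\top B)^{-1}[W\ \ b]^\top$, $Y=U^{-1}A$ written entrywise. No gaps.
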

There is a difference in the input dimensions of $\mathrm{T}^{\sigma}$ and $\mathrm{N}^{\sigma}$, as the latter includes a bias dimension absent in the former. To connect the two inputs, $\tilde{x}$ and $x$, we use a tilde, where $\tilde{x}$ is formed by augmenting $x$ with an additional one appended to the end.

By employing the structure of $K$, $Q$ and $V$ in equation~(\ref{equ:QKV_simplify}), the output forms of the Transformer $\mathrm{T}^{\sigma}\big(\tilde{x}; X, Y\big)$ can be simplified as follows:
\begin{align}
  \mathrm{T}^{\sigma}\big(\tilde{x}; X, Y\big)
   & =
  \Bigg(\begin{bmatrix} X ~~ \tilde x \\ Y ~~ 0 \end{bmatrix}
  +
  \begin{bmatrix} D X + E Y ~~ 0 \\ F X + U Y ~~ 0 \end{bmatrix}
  \sigma \Bigg(\begin{bmatrix}
                   X^\top B^\top C X         ~~ X^\top B^\top C \tilde{x} \\
                   \tilde{x}^\top B^\top C X ~~ \tilde{x}^\top B^\top C \tilde{x}
                 \end{bmatrix} \Bigg) \Bigg)_{d_x + 1 : d_x + d_y, n + 1}
  \nonumber                                        \\
   & = (FX + UY) \sigma(X^\top B^\top C \tilde{x})
  = UY \sigma(X^\top B^\top C \tilde{x}).
  \label{equ:trans_as_FNN}
\end{align}

Comparing this with the output form of FNNs, \emph{i.e.}, $\mathrm{N}^{\sigma}(x) = A \sigma(W x + b)$, it becomes evident that setting
$X = (C^\top B)^{-1} \begin{bmatrix} W &  b \end{bmatrix}^\top$ and $Y = U^{-1} A$ is sufficient to finish the proof.

It can be observed that the form in equation~(\ref{equ:trans_as_FNN}) exhibits the structure of an FNN. Consequently, Lemma~\ref{lem:trans_as_NN} implies that single-layer Transformers $\mathrm{T}^{\sigma}$ in the context of ICL and FNNs $\mathrm{N}^{\sigma}$ are equivalent. However, this equivalence does not hold for the case of softmax activation due to differences in the normalization operations between FNNs and Transformers. Therefore, in the subsequent sections of this article, we employ different analytical methods to address the two types of activation functions.

Moreover, the equivalence in equation~(\ref{equ:trans_as_NN}) suggests that the context in Transformers can act as a control parameter for the model, thereby endowing it with the UAP.

\subsection{Universal Approximation Property of In-context Learning }

We now present the UAP of Transformers in the context of ICL.

\begin{lemma}\label{lem:trans_UAP_infinity}
  Let $\sigma: \mathbb{R} \to \mathbb{R}$ be a non-polynomial, locally bounded, piecewise continuous activation function or softmax function, and $\mathrm{T}^{\sigma}$ be a single-layer Transformer satisfying Assumption~\ref{ass:1}, and $\mathcal{K}$ be a compact domain in $\mathbb{R}^{d_x-1}$. Then for any continuous function $f: \mathcal{K} \to \mathbb{R}^{d_y}$ and any $\varepsilon > 0$, there exist matrices $X \in \mathbb{R}^{d_x \times n}$ and $Y \in \mathbb{R}^{d_y \times n}$ such that
  \begin{equation}
    \left\|\mathrm{T}^{\sigma} \big(\tilde x ; X, Y \big) - f(x)\right\| < \varepsilon,
    \quad \forall x \in \mathcal{K}.
  \end{equation}
\end{lemma}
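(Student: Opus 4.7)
My strategy is to reduce the claim to the UAP of one-hidden-layer FNNs (Lemma~\ref{thm:vanilla_UAP}), treating element-wise $\sigma$ and $\operatorname{softmax}$ separately because the Transformer-as-FNN identification in Lemma~\ref{lem:trans_as_NN} does not transfer verbatim to softmax attention.

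For element-wise $\sigma$ the argument is a direct chaining of the two earlier results. Given $f$ and $\varepsilon > 0$, Lemma~\ref{thm:vanilla_UAP} provides $A \in \mathbb{R}^{d_y \times n}$, $W \in \mathbb{R}^{n \times (d_x-1)}$ and $b \in \mathbb{R}^n$ with $\|A \sigma(Wx+b) - f(x)\| < \varepsilon$ on $\mathcal{K}$. Setting $X := (C^\top B)^{-1} \begin{bmatrix} W & b \end{bmatrix}^\top$ and $Y := U^{-1} A$, both well-defined since $B, C, U$ are non-singular, Lemma~\ref{lem:trans_as_NN} gives $\mathrm{T}^\sigma(\tilde x; X, Y) = A \sigma(Wx + b)$ for all $x$, and the bound transfers immediately to the Transformer.

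The softmax case requires more work because the column-wise normalization of attention couples the query token into the denominator. Repeating the computation from equation~(\ref{equ:trans_as_FNN}) with column-wise softmax and writing $\begin{bmatrix} w_j \\ b_j \end{bmatrix} := C^\top B x^{(j)}$ and $\alpha(\tilde x) := \tilde x^\top B^\top C \tilde x$, one obtains
\begin{equation*}
  \mathrm{T}^{\operatorname{softmax}}(\tilde x; X, Y)
  = \sum_{j=1}^n U y^{(j)}\,
    \frac{e^{w_j \cdot x + b_j}}
    {e^{\alpha(\tilde x)} + \sum_{l=1}^n e^{w_l \cdot x + b_l}}.
\end{equation*}
The spurious $e^{\alpha(\tilde x)}$ comes from the $(n+1,n+1)$ entry of $(QZ)^\top K Z$, which the mask $M$ cannot remove since it only masks $VZ$. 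Crucially, $\alpha$ is continuous and $\mathcal{K}$ is compact, so $\alpha$ is bounded on $\mathcal{K}$. I would then invoke the softmax extension of Lemma~\ref{thm:vanilla_UAP} detailed in Appendix~\ref{app:B} to approximate $f$ within $\varepsilon/2$ by a softmax FNN with parameters $(a_j, w_j, b_j)$, and choose $y^{(j)} := U^{-1} a_j$ together with $x^{(j)} := (C^\top B)^{-1} \begin{bmatrix} w_j \\ b_j + M \end{bmatrix}$ for a shift $M > 0$. Factoring $e^M$ out of the numerator and the sum, the denominator becomes $e^{\alpha(\tilde x) - M} + \sum_l e^{w_l \cdot x + b_l}$, whose first term vanishes uniformly on $\mathcal{K}$ as $M \to \infty$. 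A routine continuity estimate plus the triangle inequality then yields the remaining $\varepsilon/2$.

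The main obstacle is the softmax case: unlike the element-wise situation, which is essentially a one-line corollary of the two preceding lemmas, softmax requires both a parallel UAP for softmax FNNs and the additive shift trick above to absorb the query-contributed term in the denominator.
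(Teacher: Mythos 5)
Your proposal is correct and follows essentially the same route as the paper: the element-wise case is the direct chaining of Lemma~\ref{thm:vanilla_UAP} and Lemma~\ref{lem:trans_as_NN} with the explicit choices $X = (C^\top B)^{-1}\begin{bmatrix} W & b\end{bmatrix}^\top$, $Y = U^{-1}A$, and the softmax case is handled by first invoking the softmax-FNN UAP from Appendix~\ref{app:B} and then suppressing the query-contributed term $\exp(\tilde x^\top B^\top C\tilde x)$ in the normalization via a large additive shift of the biases, exactly the role played by the parameter $s$ in the paper's construction. The only differences are cosmetic (the paper appends one extra dummy token so that the denominator matches the $k+1$-neuron softmax FNN exactly, while you factor out $e^{M}$ and use the uniform positive lower bound of the denominator on the compact set $\mathcal{K}$), and both are valid.
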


For the case of element-wise activation, the result follows directly by combining Lemma~\ref{thm:vanilla_UAP} and Lemma~\ref{lem:trans_as_NN}. However, for the softmax activation, the normalization operation requires an additional technique in the proof. The core idea is to construct an FNN with exponential activation functions, incorporating an additional neuron to handle the normalization effect. Detailed proofs are provided in Appendix~\ref{app:B}. Similar results have been obtained in recent work~\cite{Petrov2024Promptinga}, though via different methodologies.

\section{The Non-Universal Approximation Property of $\mathcal{N}^{\sigma}_*$ and $\mathcal{T}^{\sigma}_*$}\label{sec:3}

One key aspect of ICL is that the context can act as a control parameter for the model. We now consider the case where the tokens in context are restricted to a finite vocabulary. A natural question arises: can single-layer Transformers with a finite vocabulary, \emph{i.e.}, $\mathcal{T}^{\sigma}_{*}$, still achieve the UAP via ICL? We first analyze $\mathcal{N}^{\sigma}_{*}$ for simplicity, and then use the established connection between FNNs and Transformers to extend the result to $\mathcal{T}^{\sigma}_{*}$. The answer is that $\mathcal{N}^{\sigma}_{*}$ cannot achieve the UAP because of the restriction of finite parameters.

For element-wise activations, the span of $\mathcal{N}^{\sigma}_{*}$, $\text{span}\{\mathcal{N}^{\sigma}_{*}\}$, forms a finite-dimensional function space. According to results from functional analysis, $\text{span}\{\mathcal{N}^{\sigma}_{*}\}$ is closed under the function norm (see e.g. Theorem~1.21 of \cite{Rudin1991Functional} or Corollary~C.4 of \cite{Cannarsa2015Introduction}). This implies that the set of functions that can be approximated by $\text{span}\{\mathcal{N}^{\sigma}_{*}\}$ is precisely the set of functions within $\text{span}\{\mathcal{N}^{\sigma}_{*}\}$. Consequently, any function not in $\text{span}\{\mathcal{N}^{\sigma}_{*}\}$ cannot be arbitrarily approximated, meaning that the UAP cannot be achieved.

For softmax networks, the normalization operation introduces further limitations. Even though $\mathrm{N}^{\operatorname{softmax}}_{*}$ consists of weighted units drawn from a fixed finite collection of basic units, normalization prevents these networks from being simple linear combinations of one another. While the span of $\mathcal{N}^{\operatorname{softmax}}_{*}$ might theoretically have infinite dimensionality, its expressive power remains constrained.

To better understand the functional behavior of $\mathcal{N}_*^{\mathrm{softmax}}$, we introduce the structural Proposition \ref{pro:exponential} whose details are provided in Appendix \ref{app:C}. The proposition characterizes the maximum number of zero points that functions in this class can exhibit, and the result can be established via mathematical induction. This observation motivates the following lemma, which formally states the non-universal approximation property of $\mathcal{N}_*^{\sigma}$.
\begin{lemma}\label{lem:NN_nonUAP}
  The function class $\mathcal{N}^{\operatorname{\sigma}}_{*}$, with a non-polynomial, locally bounded, piecewise continuous element-wise activation function or softmax activation function $\operatorname{\sigma}$, cannot achieve the UAP. Specifically, there exist a compact domain $\mathcal{K} \subset \mathbb{R}^{d_x}$, a continuous function $f: \mathcal{K} \to \mathbb{R}^{d_y}$, and $\varepsilon_0 > 0$ such that
  \begin{equation}
    \max\limits_{x \in \mathcal{K}}
    \big\| f(x) - \mathrm{N}^{\sigma}_{*}(\tilde x) \big\| \geq \varepsilon_0,
    \quad \forall\ \mathrm{N}^{\sigma}_{*} \in \mathcal{N}^{\sigma}_{*}.
  \end{equation}
\end{lemma}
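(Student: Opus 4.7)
The plan is to split according to the activation function, since (as the excerpt itself notes) $\operatorname{span}\{\mathcal{N}^{\operatorname{softmax}}_*\}$ need not be finite-dimensional and the functional-analytic argument used for element-wise $\sigma$ does not transfer verbatim.

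\textbf{Element-wise case.} I would first observe that every element of $\mathcal{N}^{\sigma}_*$ is a nonnegative-integer combination of functions from the finite collection
\begin{equation*}
  \mathcal{B}_\sigma := \{\, a\, \sigma(w \cdot x + b) : (a, w, b) \in \mathcal{A} \times \mathcal{W} \times \mathcal{B}\,\},
\end{equation*}
so $\mathcal{N}^{\sigma}_* \subseteq V := \operatorname{span}_{\mathbb{R}}(\mathcal{B}_\sigma)$ with $\dim V \le |\mathcal{A}|\cdot|\mathcal{W}|\cdot|\mathcal{B}| < \infty$. Viewing $V$ as a subspace of the Banach space $C(\mathcal{K};\mathbb{R}^{d_y})$ under the sup norm, it is closed because finite-dimensional subspaces of normed spaces are always closed (the Rudin/Cannarsa references cited in the excerpt). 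Since $C(\mathcal{K};\mathbb{R}^{d_y})$ is infinite-dimensional, I select any continuous $f \notin V$; closedness gives $\varepsilon_0 := \operatorname{dist}_\infty(f, V) > 0$, and the inclusion $\mathcal{N}^{\sigma}_* \subseteq V$ immediately yields $\|f - \mathrm{N}^{\sigma}_*\|_\infty \ge \varepsilon_0$ for every $\mathrm{N}^{\sigma}_* \in \mathcal{N}^{\sigma}_*$.

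\textbf{Softmax case.} Here I would bypass the span by a purely geometric range constraint. Writing $\lambda_i(x) := e^{w_i \cdot x + b_i}/\sum_j e^{w_j \cdot x + b_j}$, one has
\begin{equation*}
  \mathrm{N}^{\operatorname{softmax}}_*(x) = \sum_{i=1}^k \lambda_i(x)\, a_i, \qquad \lambda_i(x) \ge 0,\ \sum_{i=1}^k \lambda_i(x) = 1,
\end{equation*}
so for every $x$ the value is a convex combination of vectors in $\mathcal{A}$, whence $\mathrm{N}^{\operatorname{softmax}}_*(\mathcal{K}) \subseteq \operatorname{conv}(\mathcal{A})$. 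Because $\mathcal{A}$ is finite, $\operatorname{conv}(\mathcal{A})$ is a compact polytope and, in particular, a proper subset of $\mathbb{R}^{d_y}$. Choosing any constant function $f \equiv c$ on $\mathcal{K}$ with $c \notin \operatorname{conv}(\mathcal{A})$ and setting $\varepsilon_0 := \operatorname{dist}(c, \operatorname{conv}(\mathcal{A})) > 0$, I obtain $\|f - \mathrm{N}^{\operatorname{softmax}}_*\|_\infty \ge \varepsilon_0$ uniformly over the class.

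\textbf{Main obstacle.} The only nontrivial case is the softmax one: the normalization destroys the finite-dimensional span structure that trivializes the element-wise argument, so one must introduce a genuinely different constraint. The convex-hull identity above is that constraint --- it costs only the single fact that $\mathcal{A}$ is finite and sidesteps any need to count zeros or to track linear independence of exponentials. Proposition~\ref{pro:exponential} referenced in the excerpt provides an alternative (zero-counting) route which is finer but unnecessary for the qualitative non-UAP statement at hand; I would invoke it only if the subsequent theorem (e.g.\ Theorem~\ref{thm:trans_nonUAP}) required a sharper quantitative form of the obstruction.
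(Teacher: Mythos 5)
Your proposal is correct. The element-wise case coincides with the paper's argument (finite-dimensional span, closedness of finite-dimensional subspaces, positive distance to a function outside the span). For the softmax case, however, you take a genuinely different and more elementary route: the paper reduces to a scalar, single-variable target, invokes Proposition~\ref{pro:exponential} to bound the number of zeros of any $\mathrm{N}^{\operatorname{softmax}}_{*}$ by $N-1$ with $N = \#(\mathcal{W}\times\mathcal{B})$ after regrouping identical exponentials, and then exhibits $\cos\big((N+1)\pi x_1\big)$, which would force at least $N+1$ zeros via the Intermediate Value Theorem. Your observation $\mathrm{N}^{\operatorname{softmax}}_{*}(\mathcal{K}) \subseteq \operatorname{conv}(\mathcal{A})$ bypasses all of this and obstructs approximation of any constant outside a fixed compact polytope. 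What you lose is the strength of the obstruction: your argument exploits only boundedness of the range, whereas the paper's zero-counting shows that even targets valued inside $\operatorname{conv}(\mathcal{A})$ (the cosine takes values in $[-1,1]$) cannot be approximated, because the class cannot oscillate enough. What you gain is economy: no reduction to one variable, no induction on exponential sums, no term regrouping. One small correction to your closing remark: Proposition~\ref{pro:exponential} is not actually needed for Theorem~\ref{thm:trans_nonUAP} either, since the softmax Transformer output there is a sub-convex combination $\sum_i \mu_i(x) a_i$ with $\mu_i \ge 0$ and $\sum_i \mu_i(x) < 1$ (the extra denominator term $\mathrm{e}^{\tilde{x}^\top B^\top C \tilde{x}}$ only shrinks the weights), so its range lies in $\operatorname{conv}(\mathcal{A}\cup\{0\})$ and your convex-hull argument extends verbatim.
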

In the proof of Lemma~\ref{lem:NN_nonUAP}, we demonstrated through Proposition~\ref{pro:exponential} that the number of zeros of $\mathrm{N}^{\operatorname{softmax}}_{*}$ depends solely on a finite set of parameters and constitutes a bounded quantity. Functions can be explicitly constructed whose number of zeros exceeds this bound, thereby preventing their approximation within $\mathcal{N}^{\operatorname{softmax}}_{*}$.

By leveraging the connection between FNNs and Transformers, we establish Theorem \ref{thm:trans_nonUAP} to demonstrate that $\mathcal{T}^{\operatorname{\sigma}}_{*}$ cannot achieve the UAP.
\begin{theorem}\label{thm:trans_nonUAP}
  The function class $\mathcal{T}^{\operatorname{\sigma}}_{*}$, with a non-polynomial, locally bounded, piecewise continuous element-wise activation function or softmax activation function $\operatorname{\sigma}$ and every $\mathrm{T}^{\sigma}\in \mathcal{T}^{\sigma}_{*}$ satisfies Assumption~\ref{ass:1}, cannot achieve the UAP. Specifically, there exist a compact domain $\mathcal{K} \subset \mathbb{R}^{d_x}$, a continuous function $f: \mathcal{K} \to \mathbb{R}^{d_y}$, and $\varepsilon_0 > 0$ such that
  \begin{equation}
    \max_{x \in \mathcal{K}}
    \big\| f(x) - \mathrm{T}^{\sigma}_{*}(\tilde x) \big\| \geq \varepsilon_0,
    \quad \forall\ \mathrm{T}^{\sigma}_{*} \in \mathcal{T}^{\sigma}_{*}.
  \end{equation}
\end{theorem}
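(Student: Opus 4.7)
The plan is to embed each $\mathrm{T}^{\sigma}_{*}\in\mathcal{T}^{\sigma}_{*}$ into a suitable $\mathcal{N}^{\sigma}_{*}$ whose finite parameter sets are induced by the (fixed) Transformer parameters together with the finite vocabulary, and then invoke Lemma~\ref{lem:NN_nonUAP}. The reduction itself reuses the block computation performed in the proof of Lemma~\ref{lem:trans_as_NN}.

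For the element-wise case, equation~(\ref{equ:trans_as_FNN}) gives
\[
  \mathrm{T}^{\sigma}_{*}(\tilde x; X, Y) = UY\,\sigma\bigl(X^\top B^\top C\,\tilde x\bigr) = \sum_{i=1}^{n} a_i\,\sigma(w_i\cdot x + b_i),
\]
where $a_i := U y^{(i)}$ and $(w_i, b_i)$ is obtained by splitting the row vector $(x^{(i)})^\top B^\top C \in\mathbb{R}^{d_x}$ into its first $d_x-1$ coordinates and its last, using $\tilde x = (x,1)$. Since $x^{(i)}\in\mathcal{V}_x$ and $y^{(i)}\in\mathcal{V}_y$ are drawn from finite sets, the induced parameter sets
\[
  \mathcal{A} := U\mathcal{V}_y, \quad \mathcal{W}\times\mathcal{B} := \bigl\{\bigl((v^\top B^\top C)_{1:d_x-1},\,(v^\top B^\top C)_{d_x}\bigr) : v\in\mathcal{V}_x\bigr\}
\]
are finite. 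Hence $\mathrm{T}^{\sigma}_{*}(\cdot; X, Y)$ belongs to the class $\mathcal{N}^{\sigma}_{*}$ defined with these sets via equation~(\ref{equ:N_star_set}). Applying Lemma~\ref{lem:NN_nonUAP} to this specific $\mathcal{N}^{\sigma}_{*}$ produces the claimed $\mathcal{K}$, $f$, and $\varepsilon_0$, valid uniformly over every context $(X,Y)$ drawn from the vocabulary.

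For the softmax activation, the same block computation, carried out with the column-wise softmax of the attention matrix, yields
\[
  \mathrm{T}^{\operatorname{softmax}}_{*}(\tilde x; X, Y) = \frac{\sum_{i=1}^{n} a_i\,\exp(w_i\cdot x + b_i)}{\sum_{i=1}^{n} \exp(w_i\cdot x + b_i) + \exp\bigl(\tilde x^\top B^\top C\,\tilde x\bigr)}.
\]
The extra term $\exp(\tilde x^\top B^\top C\,\tilde x)$ in the denominator, contributed by the unmasked last entry of the softmax column, depends on $x$ but not on the context, so the expression does not literally lie inside $\mathcal{N}^{\operatorname{softmax}}_{*}$. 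I expect this mismatch to be the main obstacle. To handle it, I would treat the extra term as a fixed ``virtual neuron'' contributing only to the denominator and replay the zero-counting argument behind Proposition~\ref{pro:exponential}: after clearing the denominator, $\mathrm{T}^{\operatorname{softmax}}_{*}(\tilde x; X, Y) - f(x)$ becomes a finite combination involving at most $|\mathcal{V}_x|+1$ distinct exponential modes (the extra one carrying a quadratic exponent) multiplied by components of $f$, whose real-zero count is uniformly bounded by a quantity depending only on the Transformer parameters and $|\mathcal{V}|$. Choosing $f$ to oscillate more often than this bound on a suitable compact $\mathcal{K}\subset\mathbb{R}^{d_x}$ then forces the required $\varepsilon_0$. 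Verifying that the quadratic exponent of the extra term does not undermine the finiteness of the zero count is the key technical step.
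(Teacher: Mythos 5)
Your element-wise case matches the paper's argument: the reduction via equation~(\ref{equ:trans_as_FNN}) places $\mathrm{T}^{\sigma}_{*}$ inside an $\mathcal{N}^{\sigma}_{*}$ with finite induced parameter sets $\mathcal{A}=U\mathcal{V}_y$ and $\mathcal{W}\times\mathcal{B}$ determined by $\mathcal{V}_x$ and $B^\top C$, and Lemma~\ref{lem:NN_nonUAP} finishes it. You also correctly identify the softmax output form, including the context-independent term $\exp(\tilde x^\top B^\top C\,\tilde x)$ in the denominator, which is exactly the expression the paper works with.

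The gap is in how you propose to close the softmax case. Your plan --- clear the denominator and bound the zeros of $\mathrm{T}^{\operatorname{softmax}}_{*}(\tilde x)-f(x)$ --- does not go through as stated: after clearing the denominator this difference contains $f$ multiplied by a sum of exponentials, and since $f$ is an arbitrary continuous target (a cosine, in the construction), the result is not a finite exponential sum, so Proposition~\ref{pro:exponential} gives you no control over its zeros. The correct and much simpler route, which is what the paper does, is to count zeros of $\mathrm{T}^{\operatorname{softmax}}_{*}$ itself rather than of the difference: the denominator is a sum of exponentials and hence strictly positive, so the zeros (and signs) of $\mathrm{T}^{\operatorname{softmax}}_{*}$ coincide with those of the numerator alone. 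Consequently the ``key technical step'' you flag --- whether the quadratic exponent $\tilde x^\top B^\top C\,\tilde x$ undermines the zero count --- is a non-issue; that term lives only in the denominator and never enters the counting. One then restricts to a one-dimensional slice (first coordinate in $[0,1]$, others fixed) so that Proposition~\ref{pro:exponential} applies, regroups identical numerator terms so the number of distinct exponential modes is bounded by $N=\#(\mathcal{W}\times\mathcal{B})$ (a quantity depending only on $\mathcal{V}_x$ and the fixed Transformer weights, not on the context length $n$), takes $f=\cos((N+1)\pi x_1)$, and derives the contradiction via the Intermediate Value Theorem exactly as in Lemma~\ref{lem:NN_nonUAP}. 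With that substitution your argument is complete.
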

The result for element-wise activations follows directly from the application of Lemma~\ref{lem:trans_as_NN} and Lemma~\ref{lem:NN_nonUAP}. However, the case of the softmax activation requires additional techniques to account for the normalization effect. The proof, which utilizes Proposition~\ref{pro:exponential} once again, is presented in the Appendix~\ref{app:C}. It is worth noting that Theorem~\ref{thm:trans_nonUAP} holds even without imposing any constraints on the $V$, $Q$ and $K$ (e.g., the sparse partition described in equation~(\ref{equ:QKV_simplify})). Further details can be found in Appendix~\ref{app:E}.

\section{The Universal Approximation Property of $\mathcal{T}^{\sigma}_{*, \mathcal{P}}$}\label{sec:4}
After establishing that neither $\mathcal{N}^{\sigma}_{*}$ nor $\mathcal{T}^{\sigma}_{*}$ can achieve the UAP, we aim to leverage a key feature of Transformers: their ability to incorporate APEs during token input. This motivates us to investigate whether $\mathcal{T}^{\sigma}_{*, \mathcal{P}}$ can realize the UAP.

The answer is affirmative. To support our constructive proof, we invoke the Kronecker Approximation Theorem as a key auxiliary tool, which will be stated as Lemma~\ref{Kronecker}. This result ensures the density of certain structured sets in $\mathbb{R}^n$ under mild arithmetic conditions. The formal statement and discussion of this theorem are provided in Appendix~\ref{app:kronecker}.
\begin{theorem}\label{thm:trans_UAP}
  Let $\mathcal{T}^{\operatorname{\sigma}}_{*,\mathcal{P}}$ be the class of functions $\mathrm{T}^{ \operatorname{\sigma}}_{*,\mathcal{P}}$ satisfying Assumption~\ref{ass:1}, with a non-polynomial, locally bounded, piecewise continuous element-wise activation function $\operatorname{\sigma}$, the subscript refers to the finite vocabulary $\mathcal{V} = \mathcal{V}_x \times \mathcal{V}_y,\ \mathcal{P} = \mathcal{P}_x \times \mathcal{P}_y$ represents the positional encoding map, and denote a set $S$ as:
  \begin{equation}
    S := \mathcal{V}_x + \mathcal{P}_x
    = \left\{x_i + \mathcal{P}_x^{(j)}
    ~\middle |~
    x_i \in \mathcal{V}_x,\ i,\ j \in \mathbb{N}^+ \right\}.
  \end{equation}
  If $S$ is dense in $\mathbb{R}^{d_x}$, $\{1,\ -1,\ \sqrt{2},\ 0\}^{d_y} \subset \mathcal{V}_y$ and $\mathcal{P}_y=0$,
  then $\mathcal{T}^{\sigma}_{*,\mathcal{P}}$ can achieve the UAP.
  More specifically, given a network $\mathrm{T}^{ \operatorname{\sigma}}_{*,\mathcal{P}}$, then for any continuous function $f: \mathbb{R}^{d_x-1} \to \mathbb{R}^{d_y}$ defined on a compact domain $\mathcal{K}$ and $\varepsilon > 0$, there always exist $X \in \mathbb{R}^{d_x \times n}$ and $Y \in \mathbb{R}^{d_y \times n}$ from the vocabulary $\mathcal{V}$, \emph{i.e.}, $x^{(i)} \in \mathcal{V}_x, y^{(i)} \in \mathcal{V}_y$, with some length $n \in \mathbb{N}^+$ such that
  \begin{equation}
    \left\|
    \mathrm{T}^{\operatorname{\sigma}}_{*,\mathcal{P}}
    \left(\tilde{x}; X, Y\right) - f(x)
    \right\| < \varepsilon, \quad \forall x \in \mathcal{K}.
  \end{equation}
\end{theorem}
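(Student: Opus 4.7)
The plan is to reduce the problem to the FNN case by extending Lemma~\ref{lem:trans_as_NN} to include positional encoding. Since $\mathcal{P}_y = 0$ forces $Y_\mathcal{P} = Y$, the Transformer evaluates to
$$
  \mathrm{T}^{\sigma}_{*,\mathcal{P}}(\tilde x; X, Y)
  = U Y\, \sigma\big(X_{\mathcal{P}}^{\top} B^{\top} C\, \tilde x_{\mathcal{P}}\big),
$$
a sum over columns of the form $U y^{(j)} \sigma\big((x^{(j)}_{\mathcal{P}})^{\top} M \tilde x_{\mathcal{P}}\big)$ with $M := B^{\top} C$ non-singular. First I would apply Lemma~\ref{thm:vanilla_UAP} to obtain an unconstrained FNN $\mathrm{N}^{\sigma}(x) = \sum_{i=1}^{k} a_i \sigma(w_i \cdot x + b_i)$ approximating $f$ on $\mathcal{K}$ within $\varepsilon/2$. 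The remaining task is to realize each target term $a_i \sigma(w_i \cdot x + b_i)$ approximately as a group of context columns drawn from the finite vocabulary.

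For each neuron $i$, I would seek $v_i^{*} \in \mathbb{R}^{d_x}$ with $v_i^{*\top} M \tilde x_{\mathcal{P}} = w_i \cdot x + b_i$ for all $x$. Substituting $\tilde x_{\mathcal{P}} = \tilde x + \mathcal{P}_x^{(n+1)}$ reduces this to a linear system whose matrix $\Phi = M[e_1, \ldots, e_{d_x-1}, e_{d_x} + \mathcal{P}_x^{(n+1)}]$ has $\det \Phi = \det(M)\,\bigl(1 + (\mathcal{P}_x^{(n+1)})_{d_x}\bigr)$. If $(\mathcal{P}_x^{(n+1)})_{d_x} = -1$ for all sufficiently large $n$, then $S$ would lie in finitely many hyperplanes, contradicting density; hence I can pick $n$ arbitrarily large with $\Phi$ invertible and solve for $v_i^{*}$. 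Density of $S$ in $\mathbb{R}^{d_x}$ then supplies infinitely many positions $j$ with $x^{(j)} + \mathcal{P}_x^{(j)} \in B(v_i^{*}, \delta)$, so I can reserve $m_i$ disjoint positions per group for any required $m_i$. For the coefficients, I would pick $y^{(i,j)} \in \{1,-1,\sqrt{2},0\}^{d_y}$ across the $i$-th group so that $U \sum_j y^{(i,j)} \approx a_i$. Since the vocabulary is a full Cartesian product, coordinates can be chosen independently, and density of $\{p + q\sqrt 2 : p \in \mathbb{Z},\, q \in \mathbb{Z}_{\geq 0}\}$ in $\mathbb{R}$ (a consequence of the Kronecker theorem, Lemma~\ref{Kronecker}) allows each coordinate of $U^{-1} a_i$ to be approximated to arbitrary precision using finitely many summands. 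Padding remaining positions with the zero vocabulary token contributes nothing, and summing over the $k$ groups yields an approximation of $\mathrm{N}^{\sigma}(x)$, and hence of $f$, within $\varepsilon$.

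The hard part is coordinating the two approximation layers so that errors compound cleanly. Because $\sigma$ is only piecewise continuous, perturbing the pre-activation by $O(\delta)$ might cross a discontinuity of $\sigma$; the standard workaround is to perturb $(w_i, b_i)$ infinitesimally beforehand so that each image set $\{w_i \cdot x + b_i : x \in \mathcal{K}\}$ avoids the discrete discontinuity set of $\sigma$, giving uniform continuity on a neighborhood. The $S$-approximation precision $\delta$ and the $y$-summation precision must then be tuned in tandem so that the per-group error stays below $\varepsilon/(2k)$; combined with choosing $n$ large enough both to host all the required distinct positions and to satisfy $(\mathcal{P}_x^{(n+1)})_{d_x} \neq -1$, this delivers the claimed bound.
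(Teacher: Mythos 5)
Your proposal follows essentially the same route as the paper's proof: apply the classical UAP of FNNs, move the hidden weights into the (transformed) dense set $S$, realize each outer coefficient as a finite $\{\sqrt{2},\pm 1\}$-combination via the Kronecker approximation theorem, and zero out unused positions with the $0$ token, exactly as in Steps (1)--(4) of the paper. The one place you diverge is in retaining the positional encoding on the query token: the paper's proof simply evaluates the attention at $\tilde x$ rather than $\tilde x_{\mathcal P}$, whereas you solve a linear system to absorb $\mathcal P_x^{(n+1)}$; note that this introduces a circularity you do not close, since your target $v_i^{*}$ depends on $\mathcal P_x^{(n+1)}$ and hence on the context length $n$, while $n$ is only determined after you know how many valid positions are needed to hit $v_i^{*}$. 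Your caveat about piecewise-continuous $\sigma$ is well taken (the paper's perturbation lemma indeed assumes continuity), but your proposed fix does not work as stated: the image $\{w_i\cdot x+b_i : x\in\mathcal K\}$ is an interval, so no small perturbation of $(w_i,b_i)$ can make it avoid a discontinuity lying in its interior.
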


We provide a constructive proof in Appendix~\ref{app:C}, and here we only demonstrate the proof idea by considering the specific case of $d_y = 1$ and assuming the matrice $U$ in the Transformer $\mathrm{T}^{\sigma}_{*, \mathcal{P}}$ is an identity matrice. In this case, the Transformer can be simplified to an FNN $\mathrm{N}^{\sigma}_{*}$, that is
\begin{equation}
  \label{equ:simple_FNN}
  \mathrm{T}^{\sigma}_{*, \mathcal{P}}(x; X, Y)
  = Y_{\mathcal{P}} \sigma \big(X_{\mathcal{P}}^\top B^\top C \tilde{x} \big)
  = \sum_{j=1}^{n} y^{(j)} \sigma \bigg(\big(x^{(j)} + \mathcal{P}_x^{(j)} \big)B^\top C  \cdot \tilde{x} \bigg),
\end{equation}
which is similar to the calculation in equation~(\ref{equ:trans_as_FNN}). The UAP of FNNs shown in Lemma~\ref{thm:vanilla_UAP} implies that the target function $f$ can be approximated by an FNN with $k$ hidden neurons, \changes{that is}
\begin{equation}\label{equ:thm_9_proof}
  \mathrm{N}^{\sigma}(x)
  = A \sigma(W \tilde{x} + b)
  = \sum_{i=1}^{k} a_i \sigma(w_i \cdot x + b_i)
  = \sum_{i=1}^{k} a_i \sigma(\tilde{w}_i \cdot \tilde{x}).
\end{equation}
Since we are considering a continuous activation function $\sigma$, we can conclude that slightly perturbing the parameters $A$ and $W$ will lead to new FNN that can still approximate $f$. This motivates us to construct a proof using the property that each $\tilde{w}_i \in \mathbb{R}^{d_x}$ can be approximated by vectors $x_{\mathcal{P}}B^\top C, x_{\mathcal{P}}\in S = \mathcal{V}_x + \mathcal{P}_x$, and each $a_i \in \mathbb{R}$ can be approximated by $q_i \sqrt{2} \pm l_i$, with positive integers $q_i$ and $l_i$.

For ease of exposition, we will first show how to construct \changes{$X$ and $Y$,} so as to approximate the first term in the summation in equation~(\ref{equ:thm_9_proof}), namely $a_1 \sigma \big(\tilde w_1 \cdot \tilde x\big)$. By Lemma~\ref{Kronecker}, we may choose positive integers $q$ and $l$ such that $q\sqrt{2}\pm l$ is sufficiently close to $a_1$. Consider the first token in the context\changes{, since} the positional encoding is fixed, \emph{i.e.}, $\mathcal V_x + \mathcal P^{(1)}$ is a finite set, then one of two cases must occur:

\begin{adjustwidth}{2em}{0pt}1. if there exists a token $x^{(1)}\in\mathcal V_x$ for which $x^{(1)} + \mathcal P^{(1)}$ is sufficiently close to $\tilde w_1$, then we declare this position ``valid'';

  2. otherwise, we declare the position ``invalid'', and choose any $x^{(1)}\in\mathcal V_x$, and set $y^{(1)} = 0$ so as to nullify its contribution in the sum.
\end{adjustwidth}

We then proceed inductively: having handled the first token, we construct the second token in exactly the same manner, then the third, and so on, until we have identified $q + l$ valid positions. Because $S$ is dense in $\mathrm{R}^{d_x}$ and $q, l$ are finite, this selection process necessarily terminates after finitely many steps. Finally, we assign $y^{(i)} = \sqrt{2}$ for $q$ of the valid positions and $y^{(i)} = \pm 1$ for other $l$ valid positions. Up to now, we have built a partial context that enables the output of $\mathrm{T}^{\sigma}_{*, \mathcal{P}}$ to approximate $a_1 \sigma \big(\tilde w_1 \cdot \tilde x\big)$ with arbitrarily small error.
Once we have approximated $a_1 \sigma\big(\tilde{w}_1 \cdot \tilde{x}\big)$, we can then approximate $a_2 \sigma\big(\tilde{w}_2 \cdot \tilde{x}\big), \cdots, a_k \sigma\big(\tilde{w}_k \cdot \tilde{x}\big)$ in finitely many steps, thereby completing the construction of the full context $X$ and $Y$.
In the proof idea above, we take the density of the set $S$ in $\mathbb{R}^{d_x}$ as a fundamental assumption. $\mathcal{V}_x$ contains only finitely many elements, rendering it bounded. For $S$ to be dense in the entire space, $\mathcal{P}_x$ must be unbounded. We further extend the above approach to discuss whether other forms of positional encoding can also achieve the UAP in Appendix~\ref{PEsFeasibility}.

Next, we relax this requirement, eliminating the need for $\mathcal{P}_x$ to be unbounded, making the conditions more aligned with practical scenarios.
Particularly, we consider the specific activation function in the following theorem, where the notations not explicitly mentioned remain consistent with those in Theorem~\ref{thm:trans_UAP}. We present an informal version, and the formal version is provided in Appendix~\ref{app:D}.

\begin{theorem}[Informal Version]\label{thm:trans_UAP_dense_in_compact_set_informal}
  If the set $S$ is dense in $[-1, 1]^{d_x}$, then $\mathcal{T}^{\operatorname{ReLU}}_{*, \mathcal{P}}$ is capable of achieving the UAP. Additionally, if $S$ is only dense in a neighborhood $B(w^*, \delta)$ of a point $w^* \in \mathbb{R}^{d_x}$ with radius $\delta > 0$, then the class of transformers with exponential activation, \emph{i.e.}, $\mathcal{T}^{\exp}_{*, \mathcal{P}}$, is capable of achieving the UAP.
\end{theorem}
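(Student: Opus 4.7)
The strategy is to reduce both parts of the theorem to the token-by-token construction already developed for Theorem~\ref{thm:trans_UAP}: approximate an FNN $\sum_i a_i\sigma(\tilde w_i\cdot\tilde x)$, realize each weight vector $\tilde w_i$ by some $x^{(j)}+\mathcal{P}_x^{(j)}$ pulled back through the invertible map induced by $B^\top C$, realize each coefficient $a_i$ by $q_i\sqrt 2\pm l_i$ using the Kronecker Approximation Theorem and the assumed $y$-vocabulary, and null out ``invalid'' positions by setting $y^{(j)}=0$. What must be added here is a preliminary step that guarantees the existence of such an approximating FNN whose pulled-back weights $(B^\top C)^{-\top}\tilde w_i$ all lie in the (bounded) region where $S$ is dense. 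Because $B^\top C$ is a fixed nonsingular linear map, the image and preimage conditions can be stated interchangeably; I will describe everything on the $\tilde w$ side.

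For the \textbf{ReLU case} with $S$ dense in $[-1,1]^{d_x}$, I would exploit positive homogeneity. Given an approximating FNN produced by Lemma~\ref{thm:vanilla_UAP}, rewrite each neuron as $a_i\operatorname{ReLU}(\tilde w_i\cdot\tilde x)=(a_i/\lambda_i)\operatorname{ReLU}(\lambda_i\tilde w_i\cdot\tilde x)$ with a small enough $\lambda_i>0$ that the pullback $(B^\top C)^{-\top}(\lambda_i\tilde w_i)$ lies inside $[-1,1]^{d_x}$. This is always possible since the image of $[-1,1]^{d_x}$ under the nonsingular map $(B^\top C)^\top$ contains an open neighborhood of the origin. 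Now $S$ supplies $x^{(j)}+\mathcal{P}_x^{(j)}$ arbitrarily close to the required pullback, and the (possibly large) coefficient $a_i/\lambda_i$ remains representable by $q_i\sqrt 2\pm l_i$. The position-validation and coefficient-assignment argument of Theorem~\ref{thm:trans_UAP} then yields the desired context $(X,Y)$ without further modification.

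For the \textbf{exp case} with $S$ dense only in $B(w^*,\delta)$, the crucial ingredient is an auxiliary density lemma: $\operatorname{span}\{\exp(w\cdot\tilde x):w\in B(w^*,\delta)\}$ is dense in $C(\mathcal{K})$. I would prove this in two layers. First, divided differences in the $w$-variable of points lying in $B(w^*,\delta)$ converge to the mixed partials $\partial_w^\alpha\exp(w\cdot\tilde x)\big|_{w=w^*}=\tilde x^\alpha\exp(w^*\cdot\tilde x)$, so the closure of the span contains $p(\tilde x)\exp(w^*\cdot\tilde x)$ for every polynomial $p$. Second, since $\exp(w^*\cdot\tilde x)$ is continuous, positive and bounded above and away from $0$ on the compact $\mathcal{K}$, Stone--Weierstrass applied to $f(x)/\exp(w^*\cdot\tilde x)$ shows that such products approximate every continuous $f$ uniformly on $\mathcal{K}$. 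After this reduction every $\tilde w_i$ needed lies in $B(w^*,\delta)$, and the Theorem~\ref{thm:trans_UAP} construction again applies in the pulled-back coordinates.

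The main obstacle I expect is the nested error control in the exp case. The polynomial degree required to approximate $f/\exp(w^*\cdot\tilde x)$ to tolerance $\varepsilon$ dictates the number of finite-difference stages, whose step size $h$ must be simultaneously small enough that the differences converge to the derivatives and that the sample points remain inside $B(w^*,\delta)$; each inverse-$h$ factor also inflates the output coefficients that must then be realized through the Kronecker construction in $\mathcal{V}_y$. Tracking these three approximation layers so that their composed error stays below $\varepsilon$, while ensuring compatibility with the density pullback through $B^\top C$, is the delicate part of the formal proof. The remaining steps for both activations are essentially a reuse of the bookkeeping already established in the proof of Theorem~\ref{thm:trans_UAP}.
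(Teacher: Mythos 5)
Your proposal is correct and follows essentially the same route as the paper: positive homogeneity to rescale the ReLU weights into the region where $S$ is dense (paper's proof of the formal version, Step (2)–(4) of Theorem~\ref{thm:trans_UAP}), and for the exponential case the same auxiliary lemma (the paper's Lemma~\ref{lem:softmax_NN_UAP_restriction}) obtained by finite differences of $\exp(w\cdot x)$ at $w^*$ plus Stone--Weierstrass applied to $f\,\mathrm{e}^{-w^*\cdot x}$. Your remarks on pulling the density condition back through the nonsingular map $B^\top C$ and on the nested error control are, if anything, slightly more explicit than the paper's own treatment.
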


The density condition on $S$ is significantly refined here, which we will discuss in the later remark. This improvement is possible because the proof of Theorem~\ref{thm:trans_UAP} relies directly on the UAP of FNNs, where the weights take values from the entire parameter space. However, for FNNs with specific activations, we can restrict the weights to a small set without losing the UAP.

For ReLU networks, we can use the positive homogeneity property, \emph{i.e.}, $A \text{ReLU}(W \tilde x) = \lambda^{-1} A \text{ReLU}\big(\lambda W \tilde x\big)$ for any $\lambda > 0$, to restrict the weight matrix $W$. In fact, the restriction that all elements of $W$ take values in the interval $[-1, 1]$ does not affect the UAP of ReLU FNNs because the scale of $W$ can be recovered by adjusting the scale of $A$ via choosing a proper $\lambda$.

For exponential networks, the condition on $S$ is much weaker than in the ReLU case. This relaxation is nontrivial, and the proof stems from a property of the derivatives of exponential functions. Consider the exponential function $\exp(w \cdot x)$ as a function of $w \in B(w^*, \delta)$, and denote it as $h(w)$,
\begin{equation}
  h(w) = \exp(w \cdot x) = \mathrm{e}^{w_1 x_1 + \cdots + w_d x_d},
  \quad w,\ x \in \mathbb{R}^d,\ d = d_x,
\end{equation}
where $w_i$ and $x_i \in \mathbb{R}$ are the components of $w$ and $x$, respectively. Calculating the partial derivatives of $h(w)$, we observe the following relations:
\begin{equation}
  \frac{\partial^\alpha h}{\partial w^\alpha} :=
  \frac{\partial^{|\alpha|} h}{\partial w_1^{\alpha_1} \cdots \partial w_d^{\alpha_d}}
  = x_1^{\alpha_1}  \cdots x_d^{\alpha_d} h(w),
\end{equation}
where $\alpha = (\alpha_1, \dots, \alpha_d) \in \mathbb{N}^d$ is the index vector representing the order of partial derivatives, and $|\alpha| := \alpha_1 + \dots + \alpha_d$. This relationship allows us to link exponential FNNs to polynomials since any polynomial $P(x)$ can be represented in the following form:
\begin{equation}
  P(x) = \exp \big(-w^* \cdot x \big)
  \Bigg( \sum_{\alpha \in \Lambda}
  a_\alpha \frac{\partial^{|\alpha|} h}{\partial w^\alpha} \Bigg)
  \Bigg|_{w = w^*},
\end{equation}
where $a_\alpha$ are the coefficients of the polynomials, $\Lambda$ is a finite set of indices, and the partial derivatives can be approximated by finite differences, which are FNNs. For example, the first-order partial derivative $\frac{\partial h}{\partial w_1} \big|_{w = w^*} = x_1 h(w^*)$ can be approximated by the following difference with a small nonzero number $\lambda \in (0, \delta)$,
\begin{equation}
  \frac{h(w^* + \lambda e_1) - h(w^*)}{\lambda}
  = \lambda^{-1} \exp((w^* + \lambda e_1) \cdot x)
  - \lambda^{-1} \exp(w^* \cdot x).
\end{equation}
This is an exponential FNN with two neurons. Finally, employing the well-known Stone-Weierstrass \changes{T}heorem, which states that any continuous function $f$ on compact domains can be approximated by polynomials, and combining the above relations between FNNs and polynomials, we can establish the UAP of exponential FNNs with weight constraints.
When $y^{(i)} = f(x^{(i)})$ (referred to as meaningfully related), the conclusion still holds in standard ICL, provided that $\mathcal{V}_y$ satisfies certain conditions. A brief proof is provided in Theorem ~\ref{thm:trans_UAP1}.

\begin{remark}\label{rmk:10}
  When discussing density, one of the most immediate examples that comes to mind is the density of rational numbers in $\mathbb{R}$. How can we effectively enumerate rational numbers? The work by \cite{Calkin2000RecountingTR} introduces an elegant method for enumerating positive rational numbers, synthesizing ideas from \cite{Stern1858Journal} and \cite{Berndt1990Analytic}. It demonstrates the computational feasibility of enumeration through an effective algorithm. Thus, we assume that positional encodings can be implemented using computer algorithms, such as iterative functions. Furthermore, since positional encodings vary across different positions, they encapsulate semantic information concerning both position and order.
\end{remark}

\section{Conclusion}\label{sec:5}
In this paper, we establish a connection between FNNs and Transformers through ICL. By leveraging the UAP of FNNs, we demonstrate that the UAP of ICL holds when the context is selected from the entire vector space.
When the context is drawn from a finite set, we explore the approximation power of VICL, showing that the UAP is achievable only when appropriate positional encodings are incorporated, highlighting their importance.

In our work, we consider Transformers with input sequences of arbitrary length, implying that the positional encoding $\mathcal{P}_x$ consists of a countably infinite set of elements. In Theorem~\ref{thm:trans_UAP}, we assume a strong density condition, which is later relaxed in Theorem~\ref{thm:trans_UAP_dense_in_compact_set_informal}.
However, in practical applications, input sequences are finite and are typically truncated for computational feasibility. This shift allows our conclusions to be interpreted through an approximation lens, where the objective is to approximate functions within a specified error margin, rather than achieving infinitesimal precision.
Additionally, to achieve the UAP, it is insightful to compare the function approximation capabilities of our approach (outlined in Lemma~\ref{lem:trans_UAP_infinity}) with the direct use of FNNs, particularly when the Transformer parameters are trainable.

It is important to note that this paper is limited to single-layer Transformers with APEs, and the main results (Theorem~\ref{thm:trans_UAP} and Theorem~\ref{thm:trans_UAP_dense_in_compact_set_informal}) focus on element-wise activations. Future research should extend these findings to multi-layer Transformers, general positional encodings (such as RPEs and RoPE), and softmax activations.
For softmax Transformers, our analysis in Sections \ref{sec:2} and \ref{sec:3} highlighted their connection to Transformers with exponential activations. However, extending this connection to the scenario in Section \ref{sec:4} proves challenging and requires more sophisticated techniques.

Although this paper primarily addresses theoretical issues, we believe our results provide valuable insights for practitioners. In Remark~\ref{rmk:10}, we observe that algorithms using function composition to enumerate dense numbers in $\mathbb{R}$ could inspire positional encodings via compositions of fixed functions, similar to RNN approaches. RNNs capture the sequential nature of information by integrating word order. However, existing research on RNNs has not explored the denseness of the sets formed by their hidden states, which we hope will inspire future experimental research. Lastly, our construction for Theorem~\ref{thm:trans_UAP} relies on the sparse partition assumption in equation~(\ref{equ:QKV_simplify}), whose practical validity remains uncertain and requires future exploration.

In fact, \citet{Touvron2025LLM, Hao2024Training} on continuous CoTs and continuous states are closely related to our work -- specifically, leveraging positional encoding to enable Transformers to achieve the UAP for functions whose domain is a finite set while the range covers the entire Euclidean space. Moreover, \citet{Xiao2025Verbalized} propose an approach for automatically adjusting prompts for function fitting, which is also related to our theoretical findings. Therefore, with further research, our theory holds practical significance.

\section*{Acknowledgements}

This work was partially supported by the National Key R\&D Program of China under grants 2024YFF0505501
and the Fundamental Research Funds for the Central Universities.
We thank the anonymous reviewers for their valuable comments and helpful suggestions.
We gratefully acknowledge the Scholar Award from the Neural Information Processing Foundation.

\bibliographystyle{IEEEtranN}
\small
\bibliography{reference.bib}
\normalsize

\newpage
\appendix
\section{Table of Notations}\label{app:A}

We present all our notations in Table~\ref{tab:1} for easy reference.

\begin{table*}[thbp]
  \centering
  \caption{Table of Notations}
  \label{tab:1}
  \vspace{1.5mm}
  \begin{tabular}{c|l}
    \toprule
    Notations
     & Explanations
    \\
    \midrule
    $d_x,\ d_y$
     & Dimensions of input and output.
    \\
    $\mathcal{P}$
     & Positional encoding.
    \\
    $X, Y$
     & Context without positional encoding.
    \\
    $X_\mathcal{P},\ Y_\mathcal{P}$
     & Context with positional encoding $\mathcal{P}$.
    \\
    $Z$
     & Input without positional encoding.
    \\
    $Z_\mathcal{P}$
     & Input with positional encoding $\mathcal{P}$.
    \\
    $\mathcal{V}$
     & Vocabulary.
    \\
    $\mathcal{V}_x,\ \mathcal{V}_y$

     & Vocabulary of $x^{(i)}$ and $y^{(i)}$.
    \\
    $\sigma$
     & Activation function.
    \\
    $\#$
     & Cardinality of a set.
    \\  $\mathrm{N}^{\sigma},\ \mathcal{N}^{\sigma}$
     & One-hidden-layer FNN and its collection.
    \\
    $\mathrm{T}^{\sigma},\ \mathcal{T}^{\sigma}$
     & Single-layer Transformer and its collection.
    \\
    $\mathrm{N}^{\sigma}_{*},\ \mathcal{N}^{\sigma}_{*}$
     & One-hidden-layer FNN with a finite set of weights and its collection.
    \\
    $\mathrm{T}^{\sigma}_{*},\ \mathcal{T}^{\sigma}_{*}$
     & Single-layer Transformer with vocabulary restrictions and its collection.
    \\
    $\mathrm{T}^{\sigma}_{*, \mathcal{P}},\ \mathcal{T}^{\sigma}_{*, \mathcal{P}}$
     & \begin{tabular}[c]{@{}l@{}}
         Single-layer Transformer with positional encoding, vocabulary restrictions, \\
         and its collection.
       \end{tabular}
    \\
    $\| \cdot \|$
     &
    The uniform norm of vectors, \emph{i.e.}, a shorthand for $\|\cdot\|_\infty$.
    \\
    $\tilde{x}$
     &
    Append a one to the end of $x$, \emph{i.e.}, $\tilde{x} = \begin{bmatrix} x \\ 1 \end{bmatrix}$.
    \\
    \bottomrule
  \end{tabular}
\end{table*}

\section{Proofs for Section~\ref{sec:2}}\label{app:B}

We provide detailed proofs of lemmas in Section~\ref{sec:2}. We will first directly prove Lemma~\ref{lem:trans_as_NN} using Lemma~\ref{thm:vanilla_UAP}. Next, by a similar method together with an additional technical refinement, we will establish Lemma~\ref{lem:trans_as_NN_softmax}. Finally, leveraging Lemma~\ref{lem:trans_as_NN_softmax}, we will prove Lemma~\ref{lem:trans_UAP_infinity}.
\subsection{Proof of Lemma~\ref{lem:trans_as_NN}}
\begin{lemmathis}{lem:trans_as_NN}
  Let $\sigma: \mathbb{R} \to \mathbb{R}$ be a non-polynomial, locally bounded, piecewise continuous activation function, and $\mathrm{T}^{\sigma}$ be a single-layer Transformer satisfying Assumption~\ref{ass:1}. For any one-hidden-layer network $\mathrm{N}^{\sigma}: \mathbb{R}^{d_x-1} \to \mathbb{R}^{d_y} \in \mathcal{N}^{\operatorname{\sigma}}$ with $n$ hidden neurons, there exist matrices $X \in \mathbb{R}^{d_x \times n}$ and $Y \in \mathbb{R}^{d_y \times n}$ such that
  \begin{equation}
    \mathrm{T}^{\sigma}\left(\tilde x; X, Y\right) = \mathrm{N}^{\sigma}(x),
    \quad
    \forall x \in \mathbb{R}^{d_x-1}.
  \end{equation}
\end{lemmathis}
\begin{proof}
  The output of $\mathrm{T}^{\sigma}$ can be computed directly as
  \begin{equation}
    \begin{aligned}
      \mathrm{T}^{\sigma}(\tilde{x}, X, Y)
       & = \big(Z + \operatorname{Attn}_{Q, K, V}^{\sigma}(\tilde{x}, X, Y)\big)_{d_x + 1: d_x + d_y, n + 1} \\
       & = \big(Z + V Z M \sigma(Z^\top Q^\top K Z))_{d_x + 1: d_x + d_y, n + 1}                             \\
       & = \Bigg(Z +
      \begin{bmatrix}
        DX + Ey & 0 \\
        UY      & 0
      \end{bmatrix}
      \begin{bmatrix}
        \sigma(X^\top B^\top C X)         & \sigma(X^\top B^\top C \tilde{x})         \\
        \sigma(\tilde{x}^\top B^\top C X) & \sigma(\tilde{x}^\top B^\top C \tilde{x})
      \end{bmatrix}
      \Bigg)_{d_x + 1: d_x + d_y, n + 1}                                                                     \\
       & = UY \sigma(X^\top B^\top C \tilde{x}).
    \end{aligned}
  \end{equation}
  One can easily observe that the output closely resembles that of a single-layer FNN.
  Suppose $\mathrm{N}^{\sigma}(x) = A \sigma(W x + b): \mathbb{R}^{d_x - 1} \to \mathbb{R}^{d_y}$ is an arbitrary single-layer FNN with $k$ hidden neurons, where $W \in \mathbb{R}^{k \times (d_x - 1)},\ A \in \mathbb{R}^{d_y \times k}$ and $\ b \in \mathbb{R}^{k}$. We construct the context by setting its length to $k$, \emph{i.e.}, $X \in \mathbb{R}^{d_x \times k},\ Y \in \mathbb{R}^{d_y \times k}$. A straightforward calculation shows that choosing
  \begin{equation}
    X = (C^\top B)^{-1} \begin{bmatrix} W & b \end{bmatrix}^\top, \quad Y = U^{-1} A,
  \end{equation}
  suffices to guarantee $\mathrm{T}^{\sigma}\left(\tilde x; X, Y\right) = \mathrm{N}^{\sigma}(x)$ for all $x\in\mathbb{R}^{d_x-1}$.
\end{proof}
\begin{remark}
  It is worth noting that in the above proof, the matrix $F$ was set to zero in accordance with Assumption~\ref{ass:1}. However, we emphasize that this is not a strict requirement. In fact, one can accommodate an arbitrary $F$ by choosing $Y = U^{-1} (A - FX)$. The choice $F = 0$ is made purely for computational convenience under our current assumptions, which is also discussed in Appendix~\ref{app:E}.
\end{remark}
\subsection{Proof of the UAP of Softmax FNNs}
Before proving Lemma~\ref{lem:trans_UAP_infinity}, we demonstrate the UAP of softmax FNNs as a supporting lemma.
\begin{lemma}[UAP of Softmax FNNs]\label{lem:trans_as_NN_softmax}
  For any continuous function $f: \mathbb{R}^{d_x} \to \mathbb{R}^{d_y}$ defined on a compact domain $\mathcal{K}$, and for any $\varepsilon > 0$, there exists a network $\mathrm{N}^{\operatorname{softmax}}(x): \mathbb{R}^{d_x} \to \mathbb{R}^{d_y}$ satisfying
  \begin{equation}
    \| \mathrm{N}^{\operatorname{softmax}}(x) - f(x) \| < \varepsilon, \quad \forall x \in \mathcal{K}.
  \end{equation}
\end{lemma}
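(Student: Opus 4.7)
The plan is to reduce this to the classical UAP of exponential-activation FNNs and then introduce a single ``normalizing'' neuron to absorb the rigid softmax denominator. Since $\sigma(t) = e^t$ is non-polynomial, locally bounded, and continuous, Lemma~\ref{thm:vanilla_UAP} applied with this activation yields an exponential FNN
\[
  h(x) = \sum_{i=1}^{k} a_i\, e^{w_i \cdot x + b_i}
\]
with $\|h(x) - f(x)\| < \varepsilon/2$ uniformly on $\mathcal{K}$. The nontrivial task is to realize (an approximation of) $h$ in the softmax-normalized form, where the denominator is determined rigidly by the exponents $w_i \cdot x + b_i$ and cannot be freely adjusted.

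To accomplish this, I would fix a scaling parameter $c > 0$ to be chosen later and construct a softmax FNN with $k+1$ neurons: for $i = 1, \ldots, k$ use the triple $(c\, a_i,\, w_i,\, b_i)$, and for the $(k{+}1)$-st neuron use $(a_{k+1},\, w_{k+1},\, b_{k+1}) = (0,\, 0,\, \log c)$. Because $a_{k+1} = 0$ the extra unit contributes nothing to the numerator but contributes $e^{\log c} = c$ to the denominator, so a direct computation gives
\[
  \mathrm{N}^{\operatorname{softmax}}(x) = \frac{c\, h(x)}{D(x) + c},
  \qquad D(x) := \sum_{i=1}^{k} e^{w_i \cdot x + b_i}.
\]
By compactness of $\mathcal{K}$ and continuity, both $H := \|h\|_{\infty, \mathcal{K}}$ and $\bar D := \|D\|_{\infty, \mathcal{K}}$ are finite.

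The final step is the straightforward estimate
\[
  \left\| \frac{c\, h(x)}{D(x) + c} - h(x) \right\|
  = \|h(x)\| \cdot \frac{D(x)}{D(x) + c}
  \leq \frac{H\,\bar D}{c},
\]
after which choosing $c$ large enough to force $H\bar D/c < \varepsilon/2$ and applying the triangle inequality with the exponential-FNN approximation bound produces $\|\mathrm{N}^{\operatorname{softmax}}(x) - f(x)\| < \varepsilon$ on $\mathcal{K}$.

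The main obstacle I anticipate is conceptual rather than computational: the softmax format forces the denominator to be exactly $\sum_i e^{w_i \cdot x + b_i}$, with no free coefficients, so one cannot simply ``divide out'' the normalization to recover an arbitrary target. The crucial observation is that an extra neuron with zero output coefficient nonetheless contributes its exponential to the denominator; tuning that neuron to contribute a dominant constant $c$ effectively decouples numerator from denominator in the large-$c$ limit, reducing the softmax case to the already-established UAP of exponential-activation FNNs.
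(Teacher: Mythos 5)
Your proof is correct and follows essentially the same route as the paper's: both reduce to the classical UAP of exponential-activation FNNs and append a $(k{+}1)$-th neuron with zero output coefficient whose only role is to make the softmax denominator effectively constant. The paper achieves this by shifting the biases so that the first $k$ denominator terms become negligible relative to a fixed term equal to $1$, whereas you inflate the dummy term to a large constant $c$ and rescale the output coefficients by $c$; the two constructions are equivalent up to this choice of which side of the denominator to scale.
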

\begin{proof}
  We first build a bridge connecting softmax FNNs and target function $f$ using Lemma~\ref{thm:vanilla_UAP}. We can construct a network
  \begin{equation}\label{equ:exp_FNN}
    \mathrm{N}^{\exp}(x) = A \exp (Wx + b) = \sum_{i=1}^{k} a_i \mathrm{e}^{w_i \cdot x + b_i},
  \end{equation}
  with $k$ hidden neurons such that
  \begin{equation}
    \max_{x \in \mathcal{K}} \|\mathrm{N}^{\exp}(x) - f(x)\| < \frac{\varepsilon}{2},
  \end{equation}
  where $a_i \in \mathbb{R}^{d_y},\ w_i \in \mathbb{R}^{d_x},\ b_i \in \mathbb{R}$.
  It therefore suffices to construct a softmax FNN $\mathrm{N}^{\operatorname{softmax}}(x)$ that approximates $\mathrm{N}^{\exp}(x)$. This task amounts to eliminating the effect of the normalization in the softmax output.

  Consider a softmax FNN
  \begin{equation}\label{equ:softmax_FNN}
    \mathrm{N}^{\operatorname{softmax}}(x)
    = A^\prime \operatorname{softmax} \big(W^\prime x + b^\prime\big)
    = \frac{\sum\limits_{i=1}^{k+1} a^\prime_i \mathrm{e}^{w^\prime_i \cdot x + b^\prime_i}}
    {\sum\limits_{j=1}^{k+1} \mathrm{e}^{w^\prime_j \cdot x + b^\prime_j}},
  \end{equation}
  with $k + 1$ hidden neurons, where $w^\prime_{k+1} = b^\prime_{k+1} = 0$, $b^\prime_i = b^\prime_i(\varepsilon)$ is sufficiently small such that
  \begin{equation}
    \mathrm{e}^{w^\prime_i \cdot x + b^\prime_i} <
    \frac{\varepsilon}{2 k \big(1 + \max_{x \in \mathcal{K}} \|\mathrm{N}^{\exp}(x)\|\big)},
    \quad \forall x \in \mathcal{K},\ i = 1, 2, \cdots, k,
  \end{equation}
  and $w^\prime_i = w_i$ for $i = 1, 2, \cdots, k$. This arrangement ensures that, in the denominators of each term in Equation~(\ref{equ:softmax_FNN}), the first $k$ entries are arbitrarily small, while the ($k+1$)-th entry is exactly one. We then simply adjust $A^\prime$ so that the numerators coincide with those in Equation~(\ref{equ:exp_FNN}), and this can be done by setting $a^\prime_i = \begin{cases}
      a_i \mathrm{e}^{b_i - b^\prime_i} , & i = 1, 2, \cdots, k \\
      0,                                  & i = k + 1
    \end{cases}$. With the formal construction now complete, we present a more precise estimate of the approximation error as follows.

  \begin{equation}
    \begin{aligned}
      \|\mathrm{N}^{\exp}(x) - \mathrm{N}^{\operatorname{softmax}}(x)\|
       & = \max_{x \in \mathcal{K}}
      \left\|\sum_{i=1}^{k} a_i \mathrm{e}^{w_i \cdot x + b_i}
      - \frac{\sum\limits_{i=1}^{k+1} a^\prime_i \mathrm{e}^{w^\prime_i \cdot x + b^\prime_i}}
      {\sum\limits_{j=1}^{k+1} \mathrm{e}^{w^\prime_j \cdot x + b^\prime_j}}\right\|   \\ &  = \max_{x \in \mathcal{K}}
      \left\|\sum_{i=1}^{k} a_i \mathrm{e}^{w_i \cdot x + b_i}
      - \frac{\sum\limits_{i=1}^{k} a_i \mathrm{e}^{w_i \cdot x + b_i}}
      {\sum\limits_{j=1}^{k} \mathrm{e}^{w^\prime_j \cdot x + b^\prime_j} + 1}\right\| \\
       & = \max_{x \in \mathcal{K}} \|\mathrm{N}^{\exp}(x)\| \cdot
      \max_{x \in \mathcal{K}} \left\| 1 - \frac{1}
      {\sum\limits_{j=1}^{k} \mathrm{e}^{w^\prime_j \cdot x + b^\prime_j} + 1}\right\| \\ &\leq \max_{x \in \mathcal{K}} \|\mathrm{N}^{\exp}(x)\| \cdot
      \max_{x \in \mathcal{K}} \bigg\|
      \sum_{j=1}^{k} \mathrm{e}^{w^\prime_j \cdot x + b^\prime_j} \bigg\|              \\
       & \leq \frac{\varepsilon}{2}.
    \end{aligned}
  \end{equation}
  This leads to the conclusion that $\|\mathrm{N}^{\operatorname{softmax}}(x) - f(x) \| < \varepsilon$ for all $x \in \mathcal{K}$, thus completing the proof.
\end{proof}

\subsection{Proof of Lemma~\ref{lem:trans_UAP_infinity}}

\begin{lemmathis}{lem:trans_UAP_infinity}
  Let $\sigma: \mathbb{R} \to \mathbb{R}$ be a non-polynomial, locally bounded, piecewise continuous activation function or softmax function, and $\mathrm{T}^{\sigma}$ be a single-layer Transformer satisfying Assumption~\ref{ass:1}, and $\mathcal{K}$ be a compact domain in $\mathbb{R}^{d_x-1}$. Then for any continuous function $f: \mathcal{K} \to \mathbb{R}^{d_y}$ and any $\varepsilon > 0$, there exist matrices $X \in \mathbb{R}^{d_x \times n}$ and $Y \in \mathbb{R}^{d_y \times n}$ such that
  \begin{equation}
    \left\|\mathrm{T}^{\sigma} \big(\tilde x ; X, Y \big) - f(x)\right\| < \varepsilon,
    \quad \forall x \in \mathcal{K}.
  \end{equation}
\end{lemmathis}

\begin{proof}
  For element-wise activation case, with the help of Lemma~\ref{thm:vanilla_UAP} and Lemma~\ref{lem:trans_as_NN}, the conclusion follows trivially.

  Then we handle the softmax case. Similarly, for any $\varepsilon > 0$, we can construct a softmax FNN $\mathrm{N}^{\operatorname{softmax}}(x)$ with $k$ hidden neurons, using Lemma~\ref{lem:trans_as_NN_softmax} such that
  \begin{equation}
    \max_{x \in \mathcal{K}} \|\mathrm{N}^{\operatorname{softmax}}(x) - f(x)\| < \frac{\varepsilon}{2}.
  \end{equation}
  We need to approximate this softmax FNN with a softmax Transformer. The computation proceeds as follows:
  \begin{equation}\label{equ:output_of_softmax_trans}
    \begin{aligned}
       & \mathrm{T}^{\operatorname{softmax}}(\tilde{x}, X, Y) \\
       & = \Bigg( Z + \begin{bmatrix}
                        DX + EY & 0 \\ UY & 0
                      \end{bmatrix} \
      \operatorname{softmax}
      \bigg( \begin{bmatrix}
                 X^\top B^\top C X         & X^\top B^\top C \tilde{x}         \\
                 \tilde{x}^\top B^\top C X & \tilde{x}^\top B^\top C \tilde{x}
               \end{bmatrix}
      \bigg)\Bigg)_{d_x + 1: d_x + d_y, n + 1}                \\
       & = UY \Bigg( \operatorname{softmax}
      \bigg(\begin{bmatrix}
                X^\top B^\top C \tilde{x} \\
                \tilde{x}^\top B^\top C \tilde{x}
              \end{bmatrix}
      \bigg)\Bigg)_{1:n}.
    \end{aligned}
  \end{equation}
  By comparing the output with that of the exponential FNN, we find that there is an additional bounded positive term $t(x) := \exp\big(\tilde{x}^\top B^\top C \tilde{x}\big)$ arising from the normalization process.

  Choose the length of the context $n = k + 1$ and matrices $X, Y$ such that
  \begin{equation}
    X^\top B^\top C = \begin{bmatrix}
      W & b + s \boldsymbol{1} \\ 0 & s
    \end{bmatrix},\ UY = \begin{bmatrix}
      A & 0
    \end{bmatrix},
  \end{equation}
  where $\boldsymbol{1}$ is all-ones vector and $s$ is big enough, making
  \begin{equation}
    \mathrm{e}^{\tilde{x}^\top B^\top C \tilde{x} - s} <
    \frac{\varepsilon}{2\big(1 + \max_{x \in \mathcal{K}}\|\mathrm{N}^{\operatorname{softmax}}(x)\|\big)},
    \quad \forall x \in \mathcal{K}.
  \end{equation}
  Then $X^\top B^\top C \tilde{x} = \begin{bmatrix}
      W & b + s \boldsymbol{1} \\ 0 & s
    \end{bmatrix} \begin{bmatrix} x \\ 1 \end{bmatrix} = \begin{bmatrix}
      W x + b + s \boldsymbol{1} \\ s
    \end{bmatrix}$, and we can compute the explicit form of $\mathrm{T}^{\operatorname{softmax}}(\tilde{x}; X, Y)$ as:
  \begin{equation}
    \begin{aligned}
      \mathrm{T}^{\operatorname{softmax}}(\tilde{x}; X, Y)
       & = \frac{\sum\limits_{i=1}^{k} a_i \exp(w_i \cdot x + b_i + s)}
      {\sum\limits_{j=1}^{k} \exp(w_j \cdot x + b_j + s) +
      \exp(s) + \exp(\tilde{x}^\top B^\top C \tilde{x})}                \\
       & = \frac{\sum\limits_{i=1}^{k} a_i \exp(w_i \cdot x + b_i)}
      {\sum\limits_{j=1}^{k} \exp(w_j \cdot x + b_j) + 1 +
        \exp(\tilde{x}^\top B^\top C \tilde{x} - s)}.
    \end{aligned}
  \end{equation}
  We estimate the upper bound of the distance between $\mathrm{N}^{\operatorname{softmax}}$ and $\mathrm{T}^{\operatorname{softmax}}$, that is
  \begin{equation}
    \begin{aligned}
       & \max_{x \in \mathcal{K}} \|\mathrm{N}^{\operatorname{softmax}}(x) -
      \mathrm{T}^{\operatorname{softmax}}(\tilde{x}; X, T) \|                              \\
       & = \max_{x \in \mathcal{K}} \left\|
      \frac{\sum\limits_{i=1}^{k} a_i \exp(w_i \cdot x + b_i)}
      {\sum\limits_{j=1}^{k} \exp(w_j \cdot x + b_j) + 1} -
      \frac{\sum\limits_{i=1}^{k} a_i \exp(w_i \cdot x + b_i)}
      {\sum\limits_{j=1}^{k} \exp(w_j \cdot x + b_j) + 1 +
      \exp(\tilde{x}^\top B^\top C \tilde{x} - s\big)}\right\|                             \\
       & = \max_{x \in \mathcal{K}} \|\mathrm{N}^{\operatorname{softmax}}(x)\| \cdot
      \max_{x \in \mathcal{K}} \left\|1 - \frac{\sum\limits_{j=1}^{k} \exp(w_j \cdot x + b_j) + 1}
      {\sum\limits_{j=1}^{k} \exp(w_j \cdot x + b_j) + 1 +
      \exp(\tilde{x}^\top B^\top C \tilde{x} - s\big)}\right\|                             \\
       & = \max_{x \in \mathcal{K}} \|\mathrm{N}^{\operatorname{softmax}}(x)\| \cdot
      \max_{x \in \mathcal{K}} \left\|\frac{\exp(\tilde{x}^\top B^\top C \tilde{x} - s\big)}
      {\sum\limits_{j=1}^{k} \exp(w_j \cdot x + b_j) + 1 +
      \exp(\tilde{x}^\top B^\top C \tilde{x} - s\big)}\right\|                             \\
       & \leq \max_{x \in \mathcal{K}} \|\mathrm{N}^{\operatorname{softmax}}(x)\| \cdot
      \max_{x \in \mathcal{K}} \big\|\exp(\tilde{x}^\top B^\top C \tilde{x} - s\big)\big\| \\
       & \leq \frac{\varepsilon}{2}.
    \end{aligned}
  \end{equation}
  As a consequence, we have $\left\|\mathrm{T}^{\sigma} \big(\tilde x ; X, Y \big) - f(x)\right\| < \varepsilon$ for all $x \in \mathcal{K}$, which finishes the proof.
\end{proof}

\section{Proofs for Section~\ref{sec:3}}\label{app:C}

In this appendix, we provide detailed proofs of Proposition~\ref{pro:exponential}, Lemma~\ref{lem:NN_nonUAP}, and Theorem~\ref{thm:trans_nonUAP} presented in Section~\ref{sec:3}. We will first use induction to prove Proposition~\ref{pro:exponential}, and then employ this proposition together with a proof by contradiction to establish Lemma~\ref{lem:NN_nonUAP} and Theorem~\ref{thm:trans_nonUAP}.

\begin{proposition}\label{pro:exponential}
  The scalar function $h_k(x) = \sum\limits_{i=1}^k a_i \mathrm{e}^{b_i x}$, with $a_i,\ b_i,\ x \in \mathbb{R}$, where the exponents $b_i$ are pairwise distinct, and at least one $a_i$ is nonzero, has at most $k - 1$ zero points.
\end{proposition}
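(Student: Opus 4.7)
The plan is to prove this classical fact about exponential polynomials by induction on $k$, using Rolle's theorem as the engine. The base case $k=1$ is immediate, since $h_1(x) = a_1 e^{b_1 x}$ with $a_1 \neq 0$ is nowhere zero, giving $0 \le k - 1 = 0$ zeros.

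For the inductive step, I would assume the statement holds for every exponential sum of at most $k-1$ terms with distinct exponents, and argue by contradiction: suppose $h_k$ has at least $k$ distinct zeros. To reduce the number of summands, I would multiply $h_k$ by the everywhere-positive factor $e^{-b_k x}$, producing
\begin{equation}
g(x) := e^{-b_k x} h_k(x) = a_k + \sum_{i=1}^{k-1} a_i e^{(b_i - b_k)x}.
\end{equation}
Since $e^{-b_k x}$ is never zero, $g$ has exactly the same zeros as $h_k$, hence at least $k$ of them. By Rolle's theorem, $g'$ must have at least $k-1$ zeros. But a direct differentiation gives
\begin{equation}
g'(x) = \sum_{i=1}^{k-1} a_i (b_i - b_k)\, e^{(b_i - b_k) x},
\end{equation}
which is an exponential sum with $k-1$ summands whose exponents $b_i - b_k$ are pairwise distinct (inherited from the hypothesis on the $b_i$) and nonzero.

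The main subtlety, and the step I would be most careful about, is verifying that the inductive hypothesis actually applies to $g'$, i.e.\ that $g'$ is not identically zero. I would handle this through a short case split on which coefficients vanish. If some $a_i$ with $i < k$ is nonzero, then since $b_i - b_k \neq 0$, the corresponding coefficient $a_i(b_i - b_k)$ in $g'$ is nonzero, so the inductive hypothesis yields at most $k-2$ zeros, contradicting the $k-1$ zeros forced by Rolle. If instead $a_1 = \cdots = a_{k-1} = 0$, then $h_k(x) = a_k e^{b_k x}$ has no zeros at all (and by assumption some $a_i$ is nonzero, forcing $a_k \neq 0$), which again contradicts having $k \ge 1$ zeros. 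The complementary case $a_k = 0$ can be absorbed by relabeling, since then $h_k$ is actually an exponential sum of at most $k-1$ terms and the inductive hypothesis immediately gives at most $k-2 \le k-1$ zeros. Combining these observations closes the induction and establishes the bound.
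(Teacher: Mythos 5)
Your proof is correct and follows essentially the same route as the paper's: induction on the number of terms, factoring out an exponential so that differentiation kills one summand, and Rolle's theorem to pass zero counts to the derivative. Your explicit handling of the degenerate cases (all remaining coefficients zero, or $a_k=0$) is slightly more careful than the paper's ``without loss of generality'' treatment, but the argument is the same.
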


\begin{proof}
  We prove this statement by induction. When $k = 1$ or $2$, this can be proven easily. We suppose $h_N(x)$ has at most $N - 1$ zero points. Now consider the case $k = N + 1$. Let $h_{N + 1}(x) = \sum_{i=1}^{N+1} a_i \mathrm{e}^{b_i x}$. Without loss of generality, assume that $a_{N+1} \neq 0$. Thus, we can rewrite $h_{N+1}(x)$ as
  \[
    h_{N+1}(x) = a_{N+1} \mathrm{e}^{b_{N+1} x}
    \big(1 + \sum_{i=1}^{N} \frac{a_i}{a_{N+1}} \mathrm{e}^{(b_i - b_{N+1}) x}\big)
    := a_{N+1} \mathrm{e}^{b_{N+1} x} g(x).
  \]
  Then we process by contradiction. Suppose $h_{N+1}(x)$ has more than $N$ zero points, which implies $g(x)$ has more than $N$ zero points. Then, according to Rolle's Theorem, $g^{\prime}(x)$ must have more than $N - 1$ zero points, which contradicts our assumption. Thus, $h_{N+1}$ have at most $N$ zero points, and the proof is complete.
\end{proof}

\begin{lemmathis}{lem:NN_nonUAP}
  The function class $\mathcal{N}^{\operatorname{\sigma}}_{*}$, with a non-polynomial, locally bounded, piecewise continuous element-wise activation function or softmax activation function $\operatorname{\sigma}$, cannot achieve the UAP. Specifically, there exist a compact domain $\mathcal{K} \subset \mathbb{R}^{d_x}$, a continuous function $f: \mathcal{K} \to \mathbb{R}^{d_y}$, and $\varepsilon_0 > 0$ such that
  \begin{equation}
    \max\limits_{x \in \mathcal{K}}
    \big\| f(x) - \mathrm{N}^{\sigma}_{*}(\tilde x) \big\| \geq \varepsilon_0,
    \quad \forall\ \mathrm{N}^{\sigma}_{*} \in \mathcal{N}^{\sigma}_{*}.
  \end{equation}
\end{lemmathis}
\begin{proof}
  For any element-wise activation $\sigma$, $\operatorname{span}\{\mathcal{N}^{\sigma}\}$ forms a finite-dimensional function space. $\operatorname{span}\{\mathcal{N}^{\sigma}\}$ is closed under the uniform norm as established by Theorem 2.1 from \cite{Rudin1991Functional} and Corollary C.4 from \cite{Cannarsa2015Introduction}. This implies that the set of functions approximable by $\operatorname{span}\{\mathcal{N}^{\sigma}\}$ is precisely the set of functions within $\operatorname{span}\{\mathcal{N}^{\sigma}\}$. Consequently, any function not in $\operatorname{span}\{\mathcal{N}^{\sigma}\}$ cannot be arbitrarily approximated, meaning that the UAP cannot be achieved.

  Then we prove the softmax case. First, we simplify the problem to facilitate the construction of a function that cannot be approximated. We observe that it suffices to prove the UAP fails when the first input coordinate ranges over $[0, 1]$ and all other coordinates are held fixed. Indeed, we can find the compact set $K \subset \mathbb{R}^{d_x}$ containing $\prod_{i=1}^{d_x} [l_i, r_i]$. If we can show that $\mathcal{N}^{\operatorname{softmax}}$ does not achieve the UAP on $[l_1, r_1] \times \prod_{i=2}^{d_x} \{l_i\}$, then, by applying a suitable affine change of variables, it follows that UAP also fails on $[0, 1] \times \prod_{i=2}^{d_x} \{l_i\}$. Consider a continuous target function
  \begin{equation}
    f: [0, 1] \times \prod_{i=2}^{d_x} \{l_i\} \to \mathbb{R},\
    (x_1, x_2, \cdots, x_{d_x}) \mapsto f_1(x_1).
  \end{equation}
  The reason why we consider such a target function is that every vector-valued function $f(x_1, \cdots, x_{d_x})$ can be represented as $f(x_1, \cdots, x_{d_x}) = \big(f_1(x_1, \cdots, x_{d_x}), \cdots, f_{d_y}(x_1, \cdots, x_{d_x})\big)$. If the UAP fails for $f$, it must fail for at least one of its scalar components. Hence it suffices to consider the one-dimensional (scalar) case. Moreover, since the values of $x_2, \cdots, x_{d_x}$ are fixed, the above reduction to a single-variable scalar function is justified. We only need to demonstrate that there exists at least one such function that cannot be approximated arbitrarily well by any $\mathrm{N}_*^{\operatorname{softmax}} \in \mathcal{N}_*^{\operatorname{softmax}}$.

  Then we will use Proposition~\ref{pro:exponential} to complete the remainder of this proof. Before that, we need to rewrite the form of the output of $\mathrm{N}^{\operatorname{softmax}}$, which is
  \begin{equation}
    \mathrm{N}_*^{\operatorname{softmax}}(x) = \frac
    {\sum\limits_{i=1}^{k} a_i \mathrm{e}^{w_i \cdot x + b_i}}
    {\sum\limits_{j=1}^{k} \mathrm{e}^{w_j \cdot x + b_j}},
  \end{equation}
  where $(a_i, w_i, b_i) \in \mathcal{A} \times \mathcal{W} \times \mathcal{B}$ is a finite set and $k$ is the number of hidden neurons. Consequently, the set $\mathcal{W} \times \mathcal{B}$ is finite, and we denote it as $N := \#(\mathcal{W} \times \mathcal{B})$. By regrouping identical terms in the numerator, we can rewrite the equation as
  \begin{equation}\label{equ:reform_zero_points}
    \mathrm{N}_*^{\operatorname{softmax}}(x) = \frac
    {\sum\limits_{i=1}^{N}
      \tilde{a}_i \mathrm{e}^{w_i \cdot x + b_i}}
    {\sum\limits_{j=1}^{k} \mathrm{e}^{w_j \cdot x + b_j}}.
  \end{equation}
  It is important to note that this transformation applies to any $\mathrm{N}_*^{\operatorname{softmax}} \in \mathcal{N}_*^{\operatorname{softmax}}$, ensuring that the number of summation terms in the numerator remains strictly bounded by $N$.

  Finally, we construct a function which cannot be approximated by such softmax networks. Assume a continuous target function
  \begin{equation}
    g: [0, 1] \times \prod_{i=2}^{d_x} \{l_i\} \to \mathbb{R},\
    (x_1, x_2, \cdots, x_{d_x}) \mapsto \cos\big((N + 1)\pi x_1\big),
  \end{equation}
  which has $(N + 1)$ zero points. If $\mathcal{N}_*^{\operatorname{softmax}}$ achieves the UAP, we assume that $\mathrm{N}_*^{\operatorname{softmax}} \in \mathcal{N}_*^{\operatorname{softmax}}$ which satisfies $ \|\mathrm{N}_*^{\operatorname{softmax}} - g\| \leq \varepsilon < \frac{1}{10}$. We denote $z_i = \frac{i}{N+1}$ for $i = 0, 1, \cdots, N+1$. It is easy to verify that $g(z_i) = 1$ if $i$ is even, and $g(z_i) = -1$ if $i$ is odd, which means $\mathrm{N}_*^{\operatorname{softmax}}(z_i) > 0.9$ for even $i$ and $\mathrm{N}_*^{\operatorname{softmax}}(z_i) < -0.9$ for odd $i$. According to the Intermediate Value Theorem, $\mathrm{N}_*^{\operatorname{softmax}}$ has at least $N+1$ zero points, which contradicts Proposition~\ref{pro:exponential}. And we finish our proof.
\end{proof}

We will use Figure~\ref{fig:1} to provide readers with an intuitive illustration of why a class of functions whose number of zeros is bounded cannot achieve universal approximation.

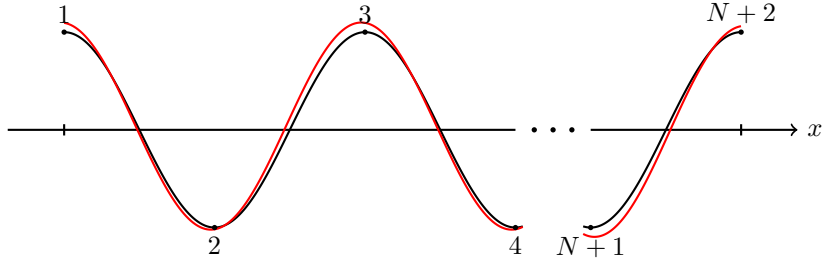
\begin{figure}[htbp]
  \centering
  \begin{tikzpicture}
    \draw[thick] (-0.75, 0) -- (6, 0);
    \draw[thick,->] (7, 0) -- (9.75, 0) node[anchor=north, right] {$x$};
    \draw[thick] (0, -0.075) -- (0, 0.075);
    \draw[thick] (9, -0.075) -- (9, 0.075);

    \fill[black] (6.25, 0) circle (1pt);
    \node at (6.25, 0) [right] {};
    \fill[black] (6.50, 0) circle (1pt);
    \node at (6.50, 0) [right] {};
    \fill[black] (6.75, 0) circle (1pt);
    \node at (6.75, 0) [right] {};

    \draw[thick, domain=0:6.1, samples=200] plot
    (\x, {1.3 * cos((pi / 2) * \x r)});
    \draw[thick, domain=0:6.1, samples=200, color=red] plot
    (\x, {1.3 * 1.06 * cos((pi / 2) * (\x + 0.05) r) + 0.05});
    \draw[thick, domain=6.9:9, samples=200] plot
    (\x, {1.3 * cos((pi / 2) * (\x - 1) r)});
    \draw[thick, domain=6.9:9, samples=200, color=red] plot
    (\x, {1.3 * 1.08 * cos((pi / 2) * (\x - 1.05) r) - 0.02});

    \fill[black] (0, 1.3) circle (1pt);
    \node at (0, 1.3) [above] {$1$};
    \fill[black] (2, -1.3) circle (1pt);
    \node at (2, -1.3) [below] {$2$};
    \fill[black] (4, 1.3) circle (1pt);
    \node at (4, 1.3) [above] {$3$};
    \fill[black] (6, -1.3) circle (1pt);
    \node at (6, -1.3) [below] {$4$};
    \fill[black] (7, -1.3) circle (1pt);
    \node at (7, -1.3) [below] {$N + 1$};
    \fill[black] (9, 1.3) circle (1pt);
    \node at (9, 1.3) [above] {$N + 2$};
  \end{tikzpicture}
  \caption{An illustration of non-approximability. The black curve represents the target function, which has $N + 1$ zero points. The red curve represents a sum of exponentials, which has no more than $N$ zero points. If the UAP holds, then the red curve must pass near the $N + 2$ marked extrema in the figure. By the Intermediate Value Theorem, the function represented by the red curve would then have $N + 1$ zeros, which contradicts its intrinsic properties.}
  \label{fig:1}
\end{figure}

\subsection{Proof of Theorem~\ref{thm:trans_nonUAP}}

\begin{theoremthis}{thm:trans_nonUAP}
  The function class $\mathcal{T}^{\operatorname{\sigma}}_{*}$, with a non-polynomial, locally bounded, piecewise continuous element-wise activation function or softmax activation function $\operatorname{\sigma}$ and every $\mathrm{T}^{\sigma}\in \mathcal{T}^{\sigma}_{*}$ satisfies Assumption~\ref{ass:1}, cannot achieve the UAP. Specifically, there exist a compact domain $\mathcal{K} \subset \mathbb{R}^{d_x}$, a continuous function $f: \mathcal{K} \to \mathbb{R}^{d_y}$, and $\varepsilon_0 > 0$ such that
  \begin{equation}
    \max_{x \in \mathcal{K}}
    \big\| f(x) - \mathrm{T}^{\sigma}_{*}(\tilde x) \big\| \geq \varepsilon_0,
    \quad \forall\ \mathrm{T}^{\sigma}_{*} \in \mathcal{T}^{\sigma}_{*}.
  \end{equation}
\end{theoremthis}

\begin{proof}
  For cases of element-wise activation, since $\mathrm{T}^{\sigma}_{*}$ has a similar structure to $\mathrm{N}^{\sigma}_{*}$, we find that $\operatorname{span}\{\mathrm{T}^{\sigma}_{*}\}$ is also a finite-dimensional function space. Hence, the same argument from Lemma~\ref{lem:NN_nonUAP} can be applied here to complete the proof.

  Then we prove the softmax case. Recall equation~(\ref{equ:output_of_softmax_trans}), the output of $\mathrm{T}_*^{\operatorname{softmax}}(\tilde{x}; X, Y)$ can be viewed as
  \begin{equation}
    \mathrm{T}_*^{\operatorname{softmax}}(\tilde{x}; X, Y) = \frac
    {\sum\limits_{i=1}^{n} a_i \mathrm{e}^{w_i \cdot x + b_i}}
    {\sum\limits_{j=1}^{n} \mathrm{e}^{w_j \cdot x + b_j} + \mathrm{e}^{\tilde{x}^\top B^\top C \tilde{x}}},
  \end{equation}
  where $n$ denotes the context length and $a_i \in \mathcal{A},\ w_i \in \mathcal{W},\ b_i \in \mathcal{B}$ for some finite sets $\mathcal{A},\ \mathcal{W},\ \mathcal{B}$. This allows us to apply the same approach as in the proof of Lemma~\ref{lem:NN_nonUAP}, which leads to the conclusion that $\mathcal{T}^{\sigma}_{*}$ cannot achieve the UAP.
\end{proof}

\section{Kronecker Approximation Theorem}\label{app:kronecker}
To facilitate our constructive proof, we introduce the Kronecker Approximation Theorem as an auxiliary tool to support the main theorem.

\begin{lemma}[Kronecker Approximation Theorem \cite{Apostol1989Modular}]\label{Kronecker}
  Given real $n$-tuples $\alpha^{(i)} = (\alpha^{(i)}_1, \alpha^{(i)}_2, \cdots, \alpha^{(i)}_n) \in \mathbb{R}^n$ for $i = 1, \cdots, m$ and $\beta = (\beta_1, \beta_2, \cdots, \beta_n) \in \mathbb{R}^n$, the following condition holds: for any $\varepsilon > 0$, there exist $q_i, l_j \in \mathbb{Z}$ such that
  \begin{equation}
    \left\|\beta_j-\sum\limits_{i=1}^m q_i \alpha_j^{(i)} + l_j\right\|
    < \varepsilon, \quad j = 1, \cdots, n,
  \end{equation}
  if and only if for any $r_1, \cdots, r_n \in \mathbb{Z},\ i = 1, \cdots, m$ with
  \begin{equation}
    \label{equ:sum_is_integer}
    \sum\limits_{j=1}^n \alpha^{(i)}_j r_j \in \mathbb{Z},
    \quad i = 1, \cdots, m,
  \end{equation}
  the number $\sum\limits_{j=1}^n \beta_j r_j$ is also an integer.
  In the case of $m = 1$ and $n = 1$, for any $\alpha,\ \beta \in \mathbb{R}$ with $\alpha$ irrational and $\varepsilon > 0$, there exist integers $l$ and $q$ with $q > 0$ such that $|\beta - q \alpha + l| < \varepsilon$.
\end{lemma}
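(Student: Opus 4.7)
The plan is to recognize Lemma~\ref{Kronecker} as a duality statement on the torus $\mathbb{T}^n := \mathbb{R}^n/\mathbb{Z}^n$: the approximation assertion for $\beta$ is exactly that the class $\bar{\beta}$ belongs to the closed subgroup
\begin{equation*}
  H := \overline{\big\{\textstyle\sum_{i=1}^m q_i \bar{\alpha}^{(i)} \,:\, q_i \in \mathbb{Z}\big\}} \subseteq \mathbb{T}^n,
\end{equation*}
while the integer-tuple hypothesis says exactly that every character $r \in \mathbb{Z}^n \cong \widehat{\mathbb{T}^n}$ annihilating every $\bar{\alpha}^{(i)}$ also annihilates $\bar{\beta}$. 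The theorem is therefore the annihilator identity $H = (H^{\perp})^{\perp}$ for closed subgroups of $\mathbb{T}^n$, and I would split the argument into necessity, sufficiency, and the one-dimensional refinement.

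For necessity, suppose the approximation is achievable for every $\varepsilon > 0$, and fix $r_1, \ldots, r_n \in \mathbb{Z}$ satisfying $\sum_j r_j \alpha_j^{(i)} \in \mathbb{Z}$ for each $i$. Multiplying the $j$-th inequality by $r_j$, summing, and applying the triangle inequality gives
\begin{equation*}
  \bigg|\sum_{j=1}^n r_j \beta_j \,-\, \Big(\sum_{i=1}^m q_i \sum_{j=1}^n r_j \alpha_j^{(i)} \,-\, \sum_{j=1}^n r_j l_j\Big)\bigg| \,<\, \varepsilon \sum_{j=1}^n |r_j|.
\end{equation*}
Under the hypothesis on $r$, the parenthesised quantity is a (fixed) integer, so $\sum_j r_j \beta_j$ lies within $\varepsilon \sum_j |r_j|$ of $\mathbb{Z}$. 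Letting $\varepsilon \to 0$ forces $\sum_j r_j \beta_j \in \mathbb{Z}$.

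For sufficiency I would argue by contraposition: assume $\bar{\beta} \notin H$ and produce an $r \in \mathbb{Z}^n$ violating the hypothesis. Since $\mathbb{T}^n$ is a compact abelian group and $H$ a proper closed subgroup, Pontryagin duality (applied to the compact quotient $\mathbb{T}^n/H$) supplies a continuous character of $\mathbb{T}^n$ that is trivial on $H$ but nontrivial on $\bar{\beta}$; every such character has the form $x \mapsto e^{2\pi i r \cdot x}$ with $r \in \mathbb{Z}^n$. Triviality on $H$ is then equivalent to $r \cdot \alpha^{(i)} \in \mathbb{Z}$ for every $i$, and nontriviality at $\bar{\beta}$ to $r \cdot \beta \notin \mathbb{Z}$ -- precisely the desired contradiction. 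In the one-dimensional case $m = n = 1$ with $\alpha$ irrational, the only integer $r$ with $r\alpha \in \mathbb{Z}$ is $r = 0$, so the hypothesis on $\beta$ holds vacuously; the general conclusion then yields integers $q, l$ with $|\beta - q\alpha + l| < \varepsilon$, and to enforce $q > 0$ I would use a Dirichlet-pigeonhole argument to produce a positive integer $Q$ with $|Q\alpha - M| < \varepsilon/2$ for some $M \in \mathbb{Z}$, and replace $(q, l)$ by $(q + kQ, \, l - kM)$ for $k$ large enough to make the first coordinate positive.

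The main obstacle is the sufficiency step, which rests on the annihilator duality for closed subgroups of $\mathbb{T}^n$. If one wishes to avoid invoking Pontryagin duality as a black box, the same conclusion can be reached by Fourier analysis on the torus along the lines of Apostol's treatment~\cite{Apostol1989Modular}: from $\bar{\beta} \notin H$ one constructs a nonnegative trigonometric polynomial that vanishes on $H$ but is strictly positive at $\bar{\beta}$, and the Fourier coefficients of this polynomial directly exhibit the separating integer vector $r$. Packaging this Fourier/Weyl-equidistribution bookkeeping cleanly is where essentially all of the technical work of the proof resides.
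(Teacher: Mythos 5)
The paper never proves this lemma: it is imported verbatim from Apostol \cite{Apostol1989Modular} as a classical auxiliary result, so there is no in-paper argument to compare against. Your duality proof is a legitimate and standard alternative to Apostol's analytic (exponential-sum) treatment. The necessity direction is correct as written: multiplying the $j$-th inequality by $r_j$, summing, and using that $\sum_i q_i\big(\sum_j r_j\alpha_j^{(i)}\big)-\sum_j r_j l_j\in\mathbb{Z}$ shows $\sum_j r_j\beta_j$ is within $\varepsilon\sum_j|r_j|$ of the closed set $\mathbb{Z}$, hence in it. The sufficiency direction is also sound in outline: the approximation statement is exactly $\bar\beta\in H$, the closure in $\mathbb{T}^n$ of the subgroup generated by the $\bar\alpha^{(i)}$; if $\bar\beta\notin H$, a character of $\mathbb{T}^n/H$ nontrivial at the class of $\bar\beta$ pulls back to some $r\in\mathbb{Z}^n$ with $r\cdot\alpha^{(i)}\in\mathbb{Z}$ for all $i$ but $r\cdot\beta\notin\mathbb{Z}$, contradicting the arithmetic hypothesis. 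This does lean on Pontryagin duality (characters separating points from closed subgroups) as a black box, which is heavier machinery than Apostol uses, but it is correct.

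The one step that fails as written is the positivity fix-up in the case $m=n=1$. You first fix $Q$ with $|Q\alpha-M|<\varepsilon/2$ and then take $k$ ``large enough'' to make $q+kQ>0$; but the error of the modified pair is $\big|(\beta-q\alpha+l)-k(Q\alpha-M)\big|$, and the term $k|Q\alpha-M|$ is not controlled once $k$ must be large (also note the integer correction should be $l+kM$, not $l-kM$). The repair is to reverse the order of quantifiers: first obtain $q,l$ with $|\beta-q\alpha+l|<\varepsilon/2$, set $k=\max\{1,\,1-q\}$ (which depends only on $q$ and guarantees $q+kQ\geq 1$ for any $Q\geq 1$), and only then use Dirichlet's theorem to pick a positive $Q$ and $M\in\mathbb{Z}$ with $|Q\alpha-M|<\varepsilon/(2k)$; the pair $(q+kQ,\,l+kM)$ then works. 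Alternatively, the $1$-D statement with $q>0$ follows directly from the classical pigeonhole argument that $\{q\alpha \bmod 1 : q\in\mathbb{N}^+\}$ is dense, without routing through the general theorem at all.
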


Lemma~\ref{Kronecker} indicates that if the condition in equation~(\ref{equ:sum_is_integer}) is satisfied only when all $r_i$ are zeros, then the set $\{M q + l \mid q \in \mathbb{Z}^m,\ l \in \mathbb{Z}^n \}$ is dense in $\mathbb{R}^n$, where the matrix $M \in \mathbb{R}^{n \times m}$ is assembled with vectors $\alpha^{(i)}$, \emph{i.e.}, $M = [\alpha^{(1)}, \alpha^{(2)}, \cdots, \alpha^{(m)}]$. In the case of $m = 1$ and $n = 1$, let $\alpha = \sqrt{2}$, then Lemma~\ref{Kronecker} implies that the set $\{q \sqrt{2} \pm l \mid l \in \mathbb{N}^+,\ q \in \mathbb{N}^+\}$ is dense in $\mathbb{R}$. We will build upon this result to prove one of the main theorems in this article.

\section{Proofs for Section~\ref{sec:4}}\label{app:D}
In this appendix, we lay the groundwork for the proof of Theorem~\ref{thm:trans_UAP} by first introducing Lemma~\ref{fNNandNN}. We then present Theorem~\ref{thm:trans_UAP} and provide its complete proof, demonstrating that $\mathcal{T}^{\sigma}_{*, \mathcal{P}}$ can realize the UAP. To facilitate understanding of Theorem~\ref{thm:trans_UAP}, we provide a simple illustrative example. While the theorem assumes dense positional encodings, we relax this condition under specific activation functions, as formalized in Lemma~\ref{lem:softmax_NN_UAP_restriction} and Theorem~\ref{thm:trans_UAP_dense_in_compact_set_informal}.

\subsection{Lemma~\ref{fNNandNN}}
\begin{lemma}
  \label{fNNandNN}
  For a network with a fixed width and a continuous activation function, it is possible to apply slight perturbations within an arbitrarily small error margin. For any network $\mathrm{N}^\sigma_1(x)$  defined on a compact set  $\mathcal{K}\subset \mathbb{R}^{d_x}$, with parameters  $A\in\mathbb{R}^{d_y\times k},W\in\mathbb{R}^{k\times d_x},b\in\mathbb{R}^{k\times 1}$, there exists $M>0,M_1>0$ ($\|x\|< M$ and  $\left \|a_i \right \| <M_1, i=1,\cdots,k$ ) , and for any  $\varepsilon>0$, there exists $0<\delta<\frac{\varepsilon}{2M_1 k} $  and a perturbed network  $\mathrm{N}^\sigma_2(x)$ with parameters  $\tilde{A}\in\mathbb{R}^{d_y\times k},\tilde{W}\in\mathbb{R}^{k\times d_x},\tilde{b}\in\mathbb{R}^{k\times 1}$ ( $\left \|\sigma(\tilde{w}_i \cdot x+\tilde{b}_i) \right \|<M_1,i=1,\cdots,k$), such that if $\max\{\|a_i-\tilde{a}_i\|,M\|w_i-\tilde{w}_i\|+\|b-\tilde{b}\|\mid i=1,\cdots,k\}<\delta$, then
  \begin{equation}
    \|N^\sigma_1(x)- N^\sigma_2(x)\| < \varepsilon, \quad\forall x\in \mathcal{K},
  \end{equation}
  where  $a_i,\tilde{a}_i$ are the  $i$-th column vectors of  $A,\tilde{A}$, respectively,  $w_i,\tilde{w}_i$  are the  $i$-th row vectors of $W, \tilde{W}$ , and  $b_i, \tilde{b}_i$  are the  $i$-th components of  $b, \tilde{b}$, respectively, for any $ i = 1, \cdots, k$.
\end{lemma}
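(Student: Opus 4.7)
The plan is to establish this as a quantitative continuity estimate. The natural strategy is to decompose the pointwise error by an ``add-and-subtract'' step, so that the outer-weight perturbation $a_i - \tilde a_i$ and the inner-parameter perturbation $(w_i, b_i) - (\tilde w_i, \tilde b_i)$ get controlled separately. Concretely, for each hidden unit $i$ I would write
\begin{equation*}
a_i \sigma(w_i \cdot x + b_i) - \tilde{a}_i \sigma(\tilde{w}_i \cdot x + \tilde{b}_i) = (a_i - \tilde{a}_i)\sigma(\tilde{w}_i \cdot x + \tilde{b}_i) + a_i\bigl[\sigma(w_i \cdot x + b_i) - \sigma(\tilde{w}_i \cdot x + \tilde{b}_i)\bigr],
\end{equation*}
sum over $i = 1, \ldots, k$, and apply the triangle inequality.

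The first summand is handled directly by the bounds already built into the hypotheses: $\|a_i - \tilde{a}_i\| < \delta$ together with $\|\sigma(\tilde{w}_i \cdot x + \tilde{b}_i)\| < M_1$ gives $\|(a_i - \tilde{a}_i)\sigma(\tilde{w}_i \cdot x + \tilde{b}_i)\| < \delta M_1$; summing over $i$ and using $\delta < \varepsilon/(2 M_1 k)$ contributes at most $\varepsilon/2$ to the total error, which is exactly why the stated bound on $\delta$ carries that denominator.

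For the second summand, I would use $\|x\| < M$ together with the assumption $M\|w_i - \tilde{w}_i\| + \|b - \tilde{b}\| < \delta$ to conclude that $|w_i \cdot x + b_i - \tilde{w}_i \cdot x - \tilde{b}_i| < \delta$ pointwise on $\mathcal{K}$. Since $\mathcal{K}$ is compact and the original parameters $(w_i, b_i)$ are fixed, the set $\{w_i \cdot x + b_i : x \in \mathcal{K},\ 1 \le i \le k\}$ lies in some bounded interval $I$; shrinking $\delta$ if necessary so that the perturbed arguments lie in a fixed enlargement $I'$ of $I$, continuity of $\sigma$ on the compact set $I'$ upgrades to uniform continuity. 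One can then choose a threshold $\delta_{\mathrm{uc}}$ for which arguments within $\delta_{\mathrm{uc}}$ of each other produce outputs within $\varepsilon/(2 M_1 k)$ of each other; multiplying by $\|a_i\| < M_1$ and summing contributes the remaining $\varepsilon/2$. Taking $\delta$ strictly smaller than $\min\bigl(\varepsilon/(2 M_1 k),\ \delta_{\mathrm{uc}}\bigr)$ then delivers the claim.

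The main obstacle is precisely this uniform-continuity step: the hypotheses do not directly constrain $\tilde{w}_i$ or $\tilde{b}_i$ in magnitude, so one must first exploit the closeness condition to confine the perturbed pre-activation arguments to a common compact interval before invoking uniform continuity of $\sigma$. Once this is done, the two halves assemble by the triangle inequality and the argument is essentially mechanical.
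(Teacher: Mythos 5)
Your proposal is correct and follows essentially the same route as the paper's proof: the identical add-and-subtract decomposition, bounding the outer-weight term by $\delta M_1$ per unit and the inner-perturbation term via continuity of $\sigma$ times $\|a_i\| < M_1$, each half contributing $\varepsilon/2$. In fact your explicit treatment of the uniform-continuity step (confining the perturbed pre-activations to a common compact interval before choosing $\delta_{\mathrm{uc}}$) is more careful than the paper's, which simply invokes ``continuity of the activation function'' to produce a single $\delta$ valid for all $x \in \mathcal{K}$ and all $i$.
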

\begin{proof}
  We have $\mathrm{N}^\sigma_1(x)=\sum\limits_{i=1}^k a_i \sigma(w_i \cdot x+b_i)$, where $a_i\in \mathbb{R}^{d_y }, w_i\in \mathbb{R}^{ d_x},b_i\in \mathbb{R}$, and $ \tilde{\mathrm{N}}^\sigma_2(x)=\sum\limits_{i=1}^k \tilde{a}_i \sigma(\tilde{w}_i \cdot x+\tilde{b}_i)$, where $\tilde{a}_{j}\in \mathbb{R}^{d_y},\tilde {w}_i\in \mathbb{R}^{d_x},\tilde{b}_i\in \mathbb{R}$. For any $x\in \mathcal{K},\|x\|< M$. There exists a constant $M_1>0$ such that for any $i=1,\cdots,k$, the following inequalities hold: $ \left \|a_i \right \|<M_1$ and $\left \|\sigma(\tilde{w}_i \cdot x+\tilde{b}_i) \right \|<M_1$.

  Due to the continuity of the activation function, for any $\varepsilon>0$, there exists $0<\delta<\frac{\varepsilon}{2M_1 k} ,$ such that if $\|w_i \cdot x+b_i-(\tilde{w}_i \cdot x+\tilde{b}_i)\|\leq \|w_i -\tilde{w}_i\|\| x\|+\|b_i-\tilde{b}_i\|<M\|w_i -\tilde{w}_i\|+\|b-\tilde{b}\|<\delta$, then
  $\|\sigma(w_i \cdot x+b_i)-\sigma(\tilde{w}_i \cdot x+\tilde{b}_i)\|<\frac{\varepsilon}{2M_1 k}$, and $\|a_i-\tilde{a}_i\|<\delta$, for any $i=1,\cdots,k$.

  Combining all these inequalities, we can further derive:

  \begin{equation}\begin{aligned}\| & \mathrm{N}^\sigma_1(x)- \mathrm{N}^\sigma_2(x)  \| \left\|\sum\limits_{i=1}^k a_i \sigma(w_i \cdot  x+b_i)-\sum\limits_{i=1}^k \tilde{a}_i \sigma(\tilde{w}_i \cdot x+\tilde{b}_i)\right\|
               \\ &   \leq \left\|\sum\limits_{i=1}^k a_i \sigma(w_i \cdot x+b_i)-\sum\limits_{i=1}^k a_i \sigma(\tilde{w}_i \cdot  x+\tilde{b}_i)\right\|+\left\|\sum\limits_{i=1}^k a_i \sigma(\tilde{w}_i \cdot x+\tilde{b}_i)-\sum\limits_{i=1}^k \tilde{a}_i \sigma(\tilde{w}_i \cdot x+\tilde{b}_i)\right\|
               \\ &     \leq \max\limits_{i}\left \|a_i \right \|\left \|\sum\limits_{i=1}^k \sigma(w_i \cdot x+b_i)-\sum\limits_{i=1}^k \sigma(\tilde{w}_i \cdot x+\tilde{b}_i)\right \|+\max\limits_{i}\left \|\sigma(\tilde{w}_i \cdot x+\tilde{b}_i) \right \|\left\|\sum\limits_{i=1}^k a_i -\sum\limits_{i=1}^k \tilde{a}_i \right\|
               \\ &  \leq \max\limits_{i}\left \|a_i \right \|\sum\limits_{i=1}^k \left \|\sigma(w_i \cdot x+b_i)-  \sigma(\tilde{w}_i \cdot x+\tilde{b}_i)\right \|+\max\limits_{i}\left \|\sigma(\tilde{w}_i \cdot  x+\tilde{b}_i) \right \|\sum\limits_{i=1}^k \left\| a_i - \tilde{a}_i \right\|
               \\  & < M_1 k \frac{\varepsilon}{2M_1 k}+M_1 k \frac{\varepsilon}{2M_1 k}  =  \varepsilon.
    \end{aligned}.
  \end{equation}
  The proof is complete.
\end{proof}

\subsection{Proof of Theorem~\ref{thm:trans_UAP}}
\begin{theoremthis}{thm:trans_UAP}
  Let $\mathcal{T}^{\operatorname{\sigma}}_{*,\mathcal{P}}$ be the class of functions $\mathrm{T}^{ \operatorname{\sigma}}_{*,\mathcal{P}}$ satisfying Assumption~\ref{ass:1}, with a non-polynomial, locally bounded, piecewise continuous element-wise activation function $\operatorname{\sigma}$, the subscript refers the finite vocabulary $\mathcal{V} = \mathcal{V}_x \times \mathcal{V}_y,\ \mathcal{P} = \mathcal{P}_x \times \mathcal{P}_y$ represents the positional encoding map, and denote a set $S$ as:
  \begin{equation}
    S := \mathcal{V}_x + \mathcal{P}_x
    = \left\{x_i + \mathcal{P}_x^{(j)}
    ~\middle |~
    x_i \in \mathcal{V}_x,\ i,\ j \in \mathbb{N}^+ \right\}.
  \end{equation}
  If $S$ is dense in $\mathbb{R}^{d_x}$, $\{1,\ -1,\ \sqrt{2},\ 0\}^{d_y} \subset \mathcal{V}_y$ and $\mathcal{P}_y=0$,
  then $\mathcal{T}^{\sigma}_{*,\mathcal{P}}$ can achieve the UAP.
  More specifically, given a network $\mathrm{T}^{ \operatorname{\sigma}}_{*,\mathcal{P}}$, then for any continuous function $f: \mathbb{R}^{d_x-1} \to \mathbb{R}^{d_y}$ defined on a compact domain $\mathcal{K}$ and $\varepsilon > 0$, there always exist $X \in \mathbb{R}^{d_x \times n}$ and $Y \in \mathbb{R}^{d_y \times n}$ from the vocabulary $\mathcal{V}$, \emph{i.e.}, $x^{(i)} \in \mathcal{V}_x, y^{(i)} \in \mathcal{V}_y$, with some length $n \in \mathbb{N}^+$ such that
  \begin{equation}
    \left\|
    \mathrm{T}^{\operatorname{\sigma}}_{*,\mathcal{P}}
    \left(\tilde{x}; X, Y\right) - f(x)
    \right\| < \varepsilon, \quad \forall x \in \mathcal{K}.
  \end{equation}
\end{theoremthis}
\begin{proof}
  Our conclusion holds for all element-wise continuous activation functions in $\mathcal{T}^{\operatorname{\sigma}}_{*,\mathcal{P}}$. We now assume $d_y = 1$ for simplicity, and the case $d_y \neq 1$ will be considered later.

  We are reformulating the problem. Using Lemma \ref{lem:trans_as_NN}, we have
  \begin{equation}\mathrm{T}^{\operatorname{\sigma}}_{*,\mathcal{P}}\left(\tilde{x}; X, Y\right) = UY_{\mathcal{P}} \operatorname{\sigma} \left( \left( X + \mathcal{P} \right)^\top B^\top C \tilde{x} \right)= UY_{\mathcal{P}}  \operatorname{\sigma} \left(  X_{\mathcal{P}}^\top B^\top C \tilde{x} \right). \end{equation}

  Since $\mathcal{P}_y=0$, it follows that $Y_{\mathcal{P}}=Y$. For any continuous function $f: \mathbb{R}^{d_x-1} \to \mathbb{R}^{d_y}$ defined on a compact domain $\mathcal{K}$ and for any $\varepsilon > 0$, we aim to show that there exists $\mathrm{T}^{\operatorname{\sigma}}_{*,\mathcal{P}} \in \mathcal{T}^{\operatorname{\sigma}}_{*,\mathcal{P}}$ such that:
  \begin{equation}\begin{aligned}
                      & \left\|\mathrm{T}^{\operatorname{\sigma}}_{*,\mathcal{P}}\left(\begin{bmatrix} x \\ 1 \end{bmatrix}; X, Y\right) - U f(x)\right\| < \|U\|\varepsilon, \quad \forall x \in \mathcal{K},
      \\
      \Leftrightarrow &
      \left\|Y \operatorname{\sigma} \left(  X_{\mathcal{P}}^\top B^\top C \tilde{x} \right) - f(x)\right\| < \varepsilon, \quad \forall x \in \mathcal{K}.\end{aligned} \end{equation}

  In the main text, for illustrative purposes, we consider the special case where $U$ is the identity matrix to simplify the exposition. In the present analysis, we dispense with this assumption. We already have the Lemma \ref{thm:vanilla_UAP} ensuring the existence of a one-hidden-layer network $\mathrm{N}^{\operatorname{\sigma}}$ (with activation function $\operatorname{\sigma}$ satisfying the required conditions) that approximates $f(x)$. Our proof is divided into four steps, serving as a bridge built upon the Lemma \ref{thm:vanilla_UAP}:
  \begin{equation}
    Y \operatorname{\sigma} \left(X_{\mathcal{P}}^\top B^\top C \tilde{x} \right)
    \xrightarrow{\text{Lemma~\ref{thm:vanilla_UAP}}}
    \mathrm{N}^{\operatorname{\sigma}}_*(x)
    \xrightarrow{\text{step (3)}} \mathrm{N}^{\prime}(x)
    \xrightarrow{\text{step (2)}} \mathrm{N}^{\operatorname{\sigma}}(x)
    \xrightarrow{\text{step (1)}} f(x).
  \end{equation}

  We present the specific details at each step.

  \paragraph{Step (1): Approximating $f(x)$ Using $\mathrm{N}^{\operatorname{\sigma}}(x)$.}
  Supported by Lemma \ref{thm:vanilla_UAP}, there exists a neural network $\mathrm{N}^{\operatorname{\sigma}}(x)=A \operatorname{\sigma}(Wx + b) =\sum\limits_{i=1}^k a_{i} \operatorname{\sigma}(w_i \cdot x +b_{i})\in \mathcal{N}^{\operatorname{\sigma}}$, with parameters $k \in \mathbb{N}^+$, $A \in \mathbb{R}^{d_y \times k}$, $b \in \mathbb{R}^{k}$, and $W \in \mathbb{R}^{k \times (d_x-1)}$,
  \begin{equation}  \| A\operatorname{\sigma}(Wx + b) - f(x) \|
    < \frac{\varepsilon}{3},
    \quad
    \forall x \in \mathcal{K}.\end{equation}

  \paragraph{Step (2): Approximating $\mathrm{N}^{\operatorname{\sigma}}(x)$ Using $\mathrm{N}^{\prime}(x)$.}
  Using Lemma \ref{Kronecker} and Lemma \ref{fNNandNN}, a neural network $\mathrm{N}^{\operatorname{\sigma}}(x)=\sum\limits_{i=1}^k a_{i} \sigma(w_i \cdot x+b_{i})\in \mathcal{N}^{\operatorname{\sigma}}$ can be perturbed into $\mathrm{N}^{\prime}(x)=\sum\limits_{i=1}^k (q\sqrt{2}  \pm l)_{i} \operatorname{\sigma}(\tilde{w}_i \cdot x+\tilde{b}_{i})$ (with $q_i \in \mathbb{N}^+$ and $l_i \in \mathbb{N}^+,i=1,\cdots,k$), such that for any $\varepsilon > 0$, there exists $0 < \delta < \frac{\varepsilon}{6 M_1 k}$ satisfying:
  \begin{equation} \max\{\|a_{i}-(q\sqrt{2}  \pm l)_{i}\|,M\|w_i-\tilde{w}_i\|+\|b-\tilde{b}\|\mid i=1,\cdots,k\}<\delta,\end{equation}
  ensuring:  \begin{equation}\|\mathrm{N}^{\operatorname{\sigma}}(x)-\mathrm{N}^{\prime}(x)\|=\left\|\sum\limits_{i=1}^k a_{i} \operatorname{\sigma}(w_i \cdot x+b_{i})-\sum\limits_{i=1}^k (q\sqrt{2}  \pm l)_{i} \operatorname{\sigma}(\tilde{w}_i \cdot x+\tilde{b}_{i})\right\|<\frac{\varepsilon}{3},\quad
    \forall x \in \mathcal{K}.\end{equation}

  \paragraph{Step (3): Approximating $\mathrm{N}^{\prime}(x)$ Using $\mathrm{N}^{\operatorname{\sigma}}_*(x)$.}

  Next, we show that $\mathrm{N}^{\operatorname{\sigma}}_*(x)=\sum\limits_{i=1}^n y^{(i)} \operatorname{\sigma}(\tilde{R}_i \cdot \tilde{x})\in \mathcal{N}^{\operatorname{\sigma}}_*$ can approximate $\mathrm{N}^{\prime}(x)=\sum\limits_{i=1}^k (q\sqrt{2}  \pm l)_{i} \operatorname{\sigma}(\tilde{w}_i \cdot \tilde{x})$.
  As a demonstration, we approximate a single term $(q\sqrt{2}  \pm l)_{1} \operatorname{\sigma}(\tilde{w}_1 \cdot \tilde{x})$. Since the positional encoding is fixed, \emph{i.e.}, $\mathcal V_x + \mathcal P^{(1)}$ is a finite set, one of two cases must occur:
  \begin{enumerate}
    \item \textit{Valid Position:} If there exists $x^{(1)} \in \mathcal{V}_x$ where $(x^{(1)} + \mathcal{P}^{(1)})^\top B^\top C \approx \tilde{w}_1$;
    \item \textit{Invalid Position:} Set $y^{(1)} = 0$ to nullify contribution.
  \end{enumerate}
  Since $S$ is dense in $\mathbb{R}^{d_x}$ and $ B^\top C$ is non-singular, the set $G:=\{\tilde{R}\mid \tilde{R}= X_{\mathcal{P}}^\top B^\top C,X_\mathcal{P}\subset 2^S\}$ remains dense.  Let $K_1$ denote the set of indices corresponding to all "valid" positions for $\tilde w_1$. Since $y^{(i)}\in\{1,-1,\sqrt{2},0\}$, we require $q_1 + l_1$ elements from $G$ that approximate $\tilde{w}_1$,  such that
  \begin{equation}\label{approximationequ}
    \begin{aligned}
       & \left\| \sum\limits_{j\in K_{1}} y^{(j)}\operatorname{\sigma}(\tilde{R}_{j}\cdot \tilde{x})- (q \sqrt{2}\pm l)_{1} \operatorname{\sigma}(\tilde{w}_1 \cdot \tilde{x}) \right\|
      \\ & =\|\sqrt{2}\sum\limits_{j\in Q_{1}}\operatorname{\sigma}(\tilde{R}_j\cdot \tilde{x})\pm \sum\limits_{j\in L_{1}}\operatorname{\sigma}(\tilde{R}_j\cdot \tilde{x})- (q \sqrt{2}\pm l)_{1}  \operatorname{\sigma}(\tilde{w}_1 \cdot \tilde{x}) \|               \\ &  < \frac{\varepsilon}{3k},\quad
      \forall x \in \mathcal{K}.
    \end{aligned}
  \end{equation}
  Here, $\#(K_1) = q_{1} + l_{1}$ and  $K_1=Q_{1}\bigcup L_{1}$, where $Q_{1}, L_1$ are disjoint subsets of positive integer indices satisfying $\#(Q_1)= q_{1}$ and $\#(L_{1})= l_{1} $. For this construction, we assign $y^{(j)}=\sqrt{2}$ for $j\in Q_{1}$ and $y^{(j)}=\pm 1$ for $j\in L_{1}$. For $j \in  \{1,2,3,\cdots, \max\limits_i\{i\in K_1\}\} \backslash K_1$, \emph{i.e.}, for the \textit{Invalid Position}, we set $y^{(j)} = 0$.

  The multi-term approximation employs parallel construction via disjoint node subsets $K_i = Q_i \cup L_i$, where $Q_i$ ($q_i$ nodes) and $L_i$ ($l_i$ nodes) implement $\sqrt{2}$ and $\pm 1$ coefficients respectively. For $j \notin \bigcup\limits_{l=1}^{k} K_l$, we set $y^{(j)} = 0$. Each term achieves:
  \begin{equation}\left\|\sum_{j\in K_i}y^{(j)}\sigma(\tilde{R}_j\cdot\tilde{x})-(q\sqrt{2}\pm l)_i\sigma(\tilde{w}_i\cdot\tilde{x})\right\|<\frac{\varepsilon}{3k}.\end{equation}
  We then define $ n = \max\{j \mid j \in \bigcup\limits_{l=1}^{k} K_l\} $. The complete network combines these approximations through:
  \begin{equation}\left \|\mathrm{N}^{\operatorname{\sigma}}_*(x)-\mathrm{N}^{\prime}(x)\right \|=\left \|\sum\limits_{i=1}^n y^{(i)} \operatorname{\sigma}(\tilde{R}_i \cdot\tilde{x})-\sum\limits_{i=1}^k (q \sqrt{2}\pm l)_{i} \operatorname{\sigma}(\tilde{w}_i \cdot \tilde{x})\right \|<\frac{\varepsilon}{3},\quad
    \forall x \in \mathcal{K}.\end{equation}
  \paragraph{Step (4): Combining Results.}

  Combining all results, we have:
  \begin{equation} \begin{aligned}
      \| Y  \operatorname{\sigma} \left(  X_{\mathcal{P}}^\top B^\top C \tilde{x} \right)- f(x) \| & =\|\mathrm{N}^{\operatorname{\sigma}}_*(x)- f(x) \|
      \\ & < \|\mathrm{N}^{\operatorname{\sigma}}_*(x)- \mathrm{N}^{\prime}(x) \|+\|\mathrm{N}^{\prime}(x)- \mathrm{N}^{\operatorname{\sigma}}(x) \|+\|\mathrm{N}^{\operatorname{\sigma}}(x)- f(x) \|\\& <\varepsilon,
      \quad
      \forall x \in \mathcal{K}.
    \end{aligned}\end{equation}
  The scalar-output results ($d_y = 1$) extend naturally to vector-valued functions via component-wise approximation. For any continuous $f: \mathbb{R}^{d_x-1} \to \mathbb{R}^{d_y}$ on a compact domain $\mathcal{K}$, uniform approximation is achieved by independently approximating each coordinate function $f_j$ with scalar networks $\mathrm{N}^\sigma_{*,j}(x)$ satisfying
  \begin{equation}\left\|\mathrm{N}^\sigma_{*,j}(x)-f_j(x)\right\|<\frac{\varepsilon}{\sqrt{d_y}},\quad \forall x\in\mathcal{K}.\end{equation}\
  The full approximator is then obtained by concatenating the component networks.
  \begin{equation}\mathrm{N}^\sigma_*(x)=\begin{bmatrix}\mathrm{N}^\sigma_{*,1}(x)\\\vdots\\\mathrm{N}^\sigma_{*,d_y}(x)\end{bmatrix},\quad\left\|\mathrm{N}^\sigma_*(x)-f(x)\right\|<\varepsilon,\end{equation}

  \begin{equation} \mathrm{N}^\sigma_{*,j}(x)=\sum\limits_{i=1}^n y_j^{(i)} \operatorname{\sigma}(\tilde{R}_i \cdot\tilde{x}),\end{equation}
  where $ y_j^{(i)}$ is the $j$-th row of the $y^{(i)}$.
  We require that the index sets satisfy $K_i^{(o)} \cap K_j^{(u)} = \emptyset$ for all $o,u,i,j \in \mathbb{N}^+$, where $K_i^{(o)}$ denotes the index set constructed for the $i$-th term approximation in the $o$-th output dimension. Furthermore, each $y^{(j)}$ must have at most one non-zero element across its dimensions. This ensures we achieve uniform approximation by independently handling each output dimension. The proof is complete.
\end{proof}

\subsection{Example of Theorem \ref{thm:trans_UAP}}

We present a concrete example with 2D input ($d_x=2$) and 2D output ($d_y=2$) to illustrate the universal approximation capability of our architecture. Consider a continuous function $f: [0,1]^2 \to \mathbb{R}^2$ defined by
\begin{equation}
  f(x_1, x_2) = \begin{bmatrix} f_1(x_1,x_2) \\ f_2(x_1,x_2) \end{bmatrix}.
\end{equation}
Our goal is to construct a module $\mathrm{T}^\sigma_{*,\mathcal{P}}$ such that
\begin{equation}
  \left\| \mathrm{T}^\sigma_{*,\mathcal{P}}\left( \begin{bmatrix} x_1 \\ x_2 \\ 1 \end{bmatrix}; X, Y \right) - f(x_1,x_2) \right\| < \varepsilon.
\end{equation}

\paragraph{Step (1): Component-wise Approximation.}
For each component $f_i$, there exists a single-hidden-layer neural network $\mathrm{N}^\sigma_i(x) = A_i \sigma(W_i x + b_i)$ such that
\begin{equation}
  \sup_{x\in[0,1]^2} \| f_i(x) - \mathrm{N}^\sigma_i(x) \| < \frac{\varepsilon}{6\sqrt{2}}, \quad i=1,2.
\end{equation}

\paragraph{Step (2): Rational Perturbation.}
We approximate each $\mathrm{N}^\sigma_i$ by a rational network $\mathrm{N}'_i$:
\begin{align}
  \mathrm{N}'_1(x) & = (3\sqrt{2}-2)\sigma(\tilde{w}_1^\top \tilde{x}), \\
  \mathrm{N}'_2(x) & = (2\sqrt{2}+1)\sigma(\tilde{w}_2^\top \tilde{x}),
\end{align}
where $\tilde{x} = \begin{bmatrix} x_1 & x_2 & 1 \end{bmatrix}^\top$, satisfying
\begin{equation}
  \sup_{x\in[0,1]^2} \| \mathrm{N}^\sigma_i(x) - \mathrm{N}'_i(x) \| < \frac{\varepsilon}{6\sqrt{2}}, \quad i=1,2.
\end{equation}

\paragraph{Step (3): Architecture Realization.}
We define a Transformer-like module $\mathrm{N}^\sigma_{*}(x)$ with shared representation:
\begin{align}
  \tilde{R} &\approx \begin{bmatrix}
    \tilde{w}_1 &
    \tilde{w}_1 &
    \tilde{w}_1 &
    \tilde{w}_1 &
    \tilde{w}_1 &
    \tilde{w}_2 &
    \tilde{w}_2 &
    \tilde{w}_2
  \end{bmatrix}^\top, \end{align}
\begin{align}
  Y = \begin{bmatrix}
        \sqrt{2} & \sqrt{2} & \sqrt{2} & -1 & -1 & 0        & 0        & 0 \\
        0        & 0        & 0        & 0  & 0  & \sqrt{2} & \sqrt{2} & 1
      \end{bmatrix},
\end{align}
such that
\begin{equation}
  \mathrm{N}^\sigma_{*}(x) = \begin{bmatrix}
    \sum_{i=1}^{8} y^{(i)}_1 \sigma(\tilde{R}_i^\top \tilde{x}) \\
    \sum_{i=1}^{8} y^{(i)}_2 \sigma(\tilde{R}_i^\top \tilde{x})
  \end{bmatrix},
  \quad
  \sup_{x\in[0,1]^2} \| \mathrm{N}'_i(x) - \mathrm{N}^\sigma_{*,i}(x) \| < \frac{\varepsilon}{6\sqrt{2}}.
\end{equation}

\paragraph{Step (4): Error Analysis.}
The total approximation error satisfies
\begin{align}
  \left\| f(x) - \mathrm{N}^\sigma_{*}(x) \right\|
   & \leq \sqrt{ \sum_{i=1}^2 \left( \| f_i - \mathrm{N}^\sigma_i \| + \| \mathrm{N}^\sigma_i - \mathrm{N}'_i \| + \| \mathrm{N}'_i - \mathrm{N}^\sigma_{*,i} \| \right)^2 } \\
   & \leq \sqrt{2 \cdot \left( \frac{\varepsilon}{2\sqrt{2}} \right)^2} = \frac{\varepsilon}{2} < \varepsilon.
\end{align}

We argue that in standard ICL, when $y^{(i)} = f(x^{(i)})$ (referred to as meaningfully related) and $\mathcal{V}_y$ satisfies certain conditions, the UAP conclusion still holds. This conclusion relies on the density of $S$, and we provide a concise argument based on Theorem~\ref{thm:trans_UAP}.

\begin{theorem}\label{thm:trans_UAP1}
  Let $\mathcal{T}^{\operatorname{\sigma}}_{*,\mathcal{P}}$ be the class of functions $\mathrm{T}^{ \operatorname{\sigma}}_{*,\mathcal{P}}$ satisfying Assumption~\ref{ass:1}, with a non-polynomial, locally bounded, piecewise continuous element-wise activation function $\operatorname{\sigma}$, the subscript refers the finite vocabulary $\mathcal{V} = \mathcal{V}_x \times \mathcal{V}_y,\ \mathcal{P} = \mathcal{P}_x \times \mathcal{P}_y$ represents the positional encoding map, and denote a set $S$ as:

  \begin{equation}
    S := \mathcal{V}_x + \mathcal{P}_x = \left\{x_i + \mathcal{P}_x^{(j)} ~\middle|~ x_i \in \mathcal{V}_x,\ i,j \in \mathbb{N}^+ \right\}.
  \end{equation}

  If $S$ is dense in $\mathbb{R}^{d_x}$, and  there exists a subset $Y_0 \subseteq \mathcal{V}_y$ whose (finite) columnwise additive combinations contain the block-diagonal pattern in

  \begin{equation}\label{theoequ}\begin{bmatrix}
      \sqrt{2} & 1      & -1     & 0        & 0      & 0      & 0        & 0      & 0      & \cdots \\
      0        & 0      & 0      & \sqrt{2} & 1      & -1     & 0        & 0      & 0      & \cdots \\
      0        & 0      & 0      & 0        & 0      & 0      & \sqrt{2} & 1      & -1     & \cdots \\
      \vdots   & \vdots & \vdots & \vdots   & \vdots & \vdots & \vdots   & \vdots & \vdots & \cdots
    \end{bmatrix}    \end{equation}

  and $\mathcal{P}_y = 0$,
  then $\mathcal{T}^{\sigma}_{*,\mathcal{P}}$ can achieve the UAP. More specifically, for any network $\mathrm{T}^{\operatorname{\sigma}}_{*,\mathcal{P}}$, and for any continuous function $f: \mathbb{R}^{d_x-1} \to \mathbb{R}^{d_y}$ defined on a compact domain $\mathcal{K}$ and any $\varepsilon > 0$, there exist $X \in \mathbb{R}^{d_x \times n}$ and $Y \in \mathbb{R}^{d_y \times n}$ from the vocabulary $\mathcal{V}$, \emph{i.e.}, $x^{(i)} \in \mathcal{V}_x, y^{(i)} \in \mathcal{V}_y$, with some length $n \in \mathbb{N}^+$ such that

  \begin{equation}\left\|
    \mathrm{T}^{\operatorname{\sigma}}_{*,\mathcal{P}}
    \left(\tilde{x}; X, Y\right) - f(x)
    \right\| < \varepsilon, \quad \forall x \in \mathcal{K}.\end{equation}

\end{theorem}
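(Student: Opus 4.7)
The plan is to reduce the statement to Theorem~\ref{thm:trans_UAP} by a token-bundling construction: each axis-aligned ``virtual'' context token from the proof of Theorem~\ref{thm:trans_UAP} (with $y$-value in $\{0\} \cup \{\pm e_j, \pm\sqrt{2}\, e_j : 1 \leq j \leq d_y\}$) is realized by a cluster of new context positions whose $y$-values lie in $Y_0$ and whose sum matches the virtual value. The hypothesis on $\mathcal{V}_y$, namely that the additive span of $Y_0$ contains every column of the pattern~\eqref{theoequ}, supplies such decompositions for each nonzero axis-aligned value. Density of $S$ then allows placing all positions in a single cluster so that their effective inputs $\tilde R_j$ are simultaneously within an arbitrarily small tolerance of a common target $\tilde w_i$; the cluster's contribution $\sum_l \bar y^{(i,l)} \sigma(\tilde R_{j_l} \cdot \tilde x)$ therefore collapses to the desired $y^{(i)}_{\mathrm{old}} \sigma(\tilde w_i \cdot \tilde x)$ up to arbitrarily small error.

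\textbf{Construction.} First apply Theorem~\ref{thm:trans_UAP} to obtain a context $(X_{\mathrm{old}}, Y_{\mathrm{old}})$ of length $n_0$ with axis-aligned columns $y^{(i)}_{\mathrm{old}}$ whose Transformer output approximates $f$ within $\varepsilon/2$ uniformly on $\mathcal{K}$. For each nonzero $y^{(i)}_{\mathrm{old}}$ fix a finite decomposition $y^{(i)}_{\mathrm{old}} = \bar y^{(i,1)} + \cdots + \bar y^{(i,m_i)}$ with $\bar y^{(i,l)} \in Y_0$, guaranteed by~\eqref{theoequ}. Then mimic the inductive scan in the proof of Theorem~\ref{thm:trans_UAP}: iterate through position indices $j = 1, 2, \ldots$, and whenever $\mathcal{V}_x + \mathcal{P}_x^{(j)}$ contains a point within tolerance $\eta$ of some still-open bundle target $\tilde w_i$, allocate position $j$ to that bundle by choosing the corresponding $x^{(j)} \in \mathcal{V}_x$ and setting $y^{(j)}$ to the next unused component $\bar y^{(i,l)}$. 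Density of $S$ ensures termination at some finite $n$, and the vector-output case follows by using disjoint index pools per output dimension, exactly as in the final step of the proof of Theorem~\ref{thm:trans_UAP}.

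\textbf{Error analysis.} Uniform continuity of $\sigma$ on the compact range of relevant $\tilde R \cdot \tilde x$-values lets $\eta$ be chosen small enough that each bundle's deviation from $y^{(i)}_{\mathrm{old}} \sigma(\tilde w_i \cdot \tilde x)$ is below $\varepsilon/(4n)$ uniformly in $x \in \mathcal{K}$, summing to a total bundle error at most $\varepsilon/4$; combined with the $\varepsilon/2$ guaranteed by Theorem~\ref{thm:trans_UAP}, this yields a cumulative $f$-error at most $3\varepsilon/4$, leaving $\varepsilon/4$ budget for residuals.

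\textbf{Main obstacle.} The delicate step is the treatment of ``filler'' scan positions, i.e., indices $j$ at which $\mathcal{V}_x + \mathcal{P}_x^{(j)}$ is far from every still-open bundle target. Theorem~\ref{thm:trans_UAP} absorbs these via $y^{(j)} = 0$, which is unavailable here when $0 \notin \mathcal{V}_y$. The remedy is that $0$ still lies in the additive span of $Y_0$ through $0 = e_j + (-e_j)$, so fillers can be grouped into null bundles whose $Y_0$-labels sum to zero; density of $S$ then permits pairing each filler with another of essentially identical $\tilde R$-value, so the pair contributes $v\, \sigma(\tilde R_1 \cdot \tilde x) + (-v)\, \sigma(\tilde R_2 \cdot \tilde x) = O(\eta)$ by uniform continuity, and the cumulative filler residual is absorbed into the reserved $\varepsilon/4$ budget. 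This pairing argument, not present in Theorem~\ref{thm:trans_UAP}, is the only new ingredient needed beyond a direct reduction.
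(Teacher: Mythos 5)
Your overall strategy is the same as the paper's: reuse Steps (1)--(2) of the proof of Theorem~\ref{thm:trans_UAP} verbatim, and replace each axis-aligned coefficient $\sqrt{2}e_j,\ \pm e_j$ in Step (3) by a cluster of context positions whose $\mathcal{V}_y$-labels come from $Y_0$ and sum to that coefficient, all clustered positions having effective weights $\tilde R$ within $\eta$ of the common target $\tilde w_i$. The paper only gives a two-sentence sketch of this, and your write-up of the bundling, the error budget, and the component-wise extension to $d_y>1$ is a faithful and more explicit version of the intended argument.

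However, the one genuinely new ingredient you introduce --- the cancellation of ``filler'' positions when $0\notin\mathcal{V}_y$ --- contains a gap, and it is precisely the step you single out as the main obstacle. Your plan is to pair each filler at position $j$ (with point $p_j\in\mathcal{V}_x+\mathcal{P}_x^{(j)}$ and some label $v$) with later positions whose effective weights lie within $\eta$ of $p_j$ and whose labels sum to $-v$. Density of $S$ does guarantee that infinitely many later positions can serve as such partners, so \emph{each individual} cancellation requirement is eventually met; but the partner positions are themselves drawn from the scan, and every position between a filler and its partner that is not valid for any open requirement becomes a \emph{new} filler, spawning a new cancellation requirement. Nothing in the density hypothesis bounds how far one must scan to find a partner, so the ``debt'' of uncancelled fillers need not ever return to zero: one can arrange $\mathcal{P}_x$ so that the positions realizing density are sparse while the intervening positions sit at ever more remote, mutually distant locations, each forcing a fresh cancellation target. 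Your assertion that ``density of $S$ ensures termination at some finite $n$'' is therefore not justified for the filler-augmented requirement set (it is justified in Theorem~\ref{thm:trans_UAP} only because there fillers are killed outright by $y^{(j)}=0$). A secondary issue in the same step: the bound $v\,\sigma(\tilde R_1\cdot\tilde x)+(-v)\,\sigma(\tilde R_2\cdot\tilde x)=O(\eta)$ requires uniform continuity of $\sigma$ on a compact set containing all filler pre-activations, but the fillers' $\tilde R$-values are unbounded a priori (since $\mathcal{P}_x$ must be unbounded for $S$ to be dense in $\mathbb{R}^{d_x}$) and which fillers occur depends on how far you scan, which in turn depends on $\eta$; this circularity also needs to be broken. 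To be fair, the paper's own justification of Theorem~\ref{thm:trans_UAP1} does not address invalid positions at all, so the gap you are trying to fill is real and unresolved there as well; but your pairing argument as stated does not close it.
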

We reuse Steps (1)$-$(2) from the proof of Theorem~\ref{thm:trans_UAP}, focusing on understanding the third step.
There exists a subset $Y_0 \subseteq \mathcal{V}_y$ whose additive combinations of columns contain the pattern in Eq.~\eqref{theoequ}. The role of this structure is to enable cumulative approximation through additive combinations.

\subsection{Feasibility of UAP under Different Positional Encodings}\label{PEsFeasibility}
We also pay attention to more dynamic positional encodings such as RoPE, and are currently exploring appropriate analytical methods for them. Our recent progress on APEs has given us greater confidence in studying RPEs.
Our analytical framework mainly relies on achieving density of the set $\sigma({X^\top B^\top C \tilde{x}}),$ in particular, on the richness of the term $X^\top B^\top C$. (See Lemma~\ref{lem:trans_as_NN} and Theorem~\ref{thm:trans_UAP} for supporting arguments).

For RoPE, whose basic formulation is given by equation (16) in \cite{Su2024Roformer}
\begin{equation}
  q_{m}^{\top}k_{n} = (R_{\Theta,m}^{d} W_{q} x_{m})^{\top} (R_{\Theta,n}^{d} W_{k} x_{n}),
\end{equation}
applying our approach yields
\begin{equation}
  \mathrm{T}^{\sigma}(\tilde{x}, X, Y) = UY  \sigma(X^\top B^\top (R_{\Theta,1:n}^{d})^{\top} C \tilde{x}).
\end{equation}
However, since the rotation operation in RoPE acts on distinct two-dimensional subspaces of $d_x$, the induced family $\{B^\top (R_{\Theta,j}^{d})^{\top} C \}$ does not generate a dense subset; hence our density-based argument does not directly apply to RoPE.
Consequently, our current method cannot be directly applied to prove that RoPE possesses similar approximation properties.

Likewise, other RPEs, such as the one defined in equation (4) of \cite{Shaw2018SelfAttentiona},
\begin{equation}
  e_{ij} = \frac{x_i W^Q (x_j W^K + a_{ij}^K)^\top}{\sqrt{d_z}},
\end{equation}
cannot be analyzed using this approach either.
Nevertheless, the encoding formulation in \cite{Dai2019TransformerXLa},
\begin{equation}
  \begin{aligned}
    A_{i,j}^{\mathrm{rel}} & =
    \underbrace{E_{x_i}^\top W_q^\top W_{k,E} E_{x_j}}_{(a)}+
    \underbrace{E_{x_i}^\top W_q^\top W_{k,R} R_{i-j}}_{(b)}+
    \underbrace{u^\top W_{k,E} E_{x_j}}_{(c)}+
    \underbrace{v^\top W_{k,R} R_{i-j}}_{(d)},
  \end{aligned}\end{equation}
can be accommodated within our framework and is compatible with the UAP result.

\subsection{Proof of Theorem~\ref{thm:trans_UAP_dense_in_compact_set_informal}}

Before proving Theorem~\ref{thm:trans_UAP_dense_in_compact_set_informal}, we need to prove the following lemma with the help of the well-known Stone-Weierstrass theorem.
\begin{lemma}\label{lem:softmax_NN_UAP_restriction}
  For any continuous function $f: \mathbb{R}^{d_x} \to \mathbb{R}^{d_y}$ defined on a compact domain $\mathcal{K}$, and for any $\varepsilon > 0$, there exists a network $\mathrm{N}^{\operatorname{exp}}(x): \mathbb{R}^{d_x} \to \mathbb{R}^{d_y}$ satisfying
  \begin{equation}
    \| \mathrm{N}^{\operatorname{exp}}(x) - f(x) \| < \varepsilon, \quad \forall x \in \mathcal{K},
  \end{equation}
  where $b = 0$ and all row vectors of $W$ are restricted in a neighborhood $B(\omega^*, \delta)$ with any fixed $w^* \in \mathbb{R}^{d_x}$ and radius $\delta > 0$.
\end{lemma}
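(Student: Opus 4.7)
The plan is to reduce the claim to the Stone--Weierstrass theorem via the identity
\begin{equation}
\frac{\partial^{\alpha}h}{\partial w^{\alpha}}\bigg|_{w=w^{*}} = x^{\alpha}\,\mathrm{e}^{w^{*}\cdot x}, \qquad h(w):=\exp(w\cdot x),
\end{equation}
already highlighted in the main text. First I would dispose of the vector-valued case by approximating each coordinate $f_{j}:\mathcal{K}\to\mathbb{R}$ separately and concatenating; so assume $d_{y}=1$. Then define $g(x):=f(x)\,\exp(-w^{*}\cdot x)$, which is continuous on the compact set $\mathcal{K}$. By the classical Stone--Weierstrass theorem applied to the polynomial algebra on $\mathcal{K}$, there exists a polynomial $Q(x)=\sum_{\alpha\in\Lambda}c_{\alpha}x^{\alpha}$ with a finite index set $\Lambda\subset\mathbb{N}^{d_{x}}$ such that $\|Q(x)-g(x)\|<\varepsilon/(2M)$ on $\mathcal{K}$, where $M:=\max_{x\in\mathcal{K}}\exp(w^{*}\cdot x)$. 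Multiplying back through by $\exp(w^{*}\cdot x)$ yields
\begin{equation}
\bigg\|f(x)-\sum_{\alpha\in\Lambda}c_{\alpha}\,x^{\alpha}\,\mathrm{e}^{w^{*}\cdot x}\bigg\|<\varepsilon/2,\qquad x\in\mathcal{K}.
\end{equation}

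Next I would convert the target $\sum_{\alpha}c_{\alpha}x^{\alpha}\mathrm{e}^{w^{*}\cdot x}=\sum_{\alpha}c_{\alpha}\,\partial^{\alpha}h/\partial w^{\alpha}|_{w=w^{*}}$ into an exponential network whose weights lie in $B(w^{*},\delta)$. For each multi-index $\alpha=(\alpha_{1},\dots,\alpha_{d_{x}})\in\Lambda$ I would apply the standard iterated forward-difference operator
\begin{equation}
\Delta_{\lambda}^{\alpha}h(w^{*}):=\lambda^{-|\alpha|}\sum_{0\leq\beta\leq\alpha}(-1)^{|\alpha|-|\beta|}\binom{\alpha}{\beta}\,h(w^{*}+\lambda\beta),
\end{equation}
which is a linear combination of terms $\exp\bigl((w^{*}+\lambda\beta)\cdot x\bigr)$ with $\beta\in\mathbb{N}^{d_{x}}$, $0\leq\beta_{i}\leq\alpha_{i}$. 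Choosing the step size $\lambda\in(0,\delta/(C\cdot\max_{\alpha\in\Lambda}|\alpha|))$ for a suitable constant $C$ (depending on the norm defining $B(w^{*},\delta)$) guarantees $w^{*}+\lambda\beta\in B(w^{*},\delta)$ for every $\beta$ appearing. Standard Taylor-remainder estimates for finite differences give
\begin{equation}
\bigg|\Delta_{\lambda}^{\alpha}h(w^{*})-\frac{\partial^{\alpha}h}{\partial w^{\alpha}}\bigg|_{w=w^{*}}\bigg|\leq K_{\alpha}(x)\,\lambda,
\end{equation}
where $K_{\alpha}(x)$ depends polynomially on $x$ and on the supremum of derivatives of $h$ on the segment, hence is uniformly bounded for $x\in\mathcal{K}$.

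Combining, I would take the candidate network
\begin{equation}
\mathrm{N}^{\exp}(x):=\sum_{\alpha\in\Lambda}c_{\alpha}\,\Delta_{\lambda}^{\alpha}h(w^{*}),
\end{equation}
which, after expanding the differences, is a finite sum $\sum_{j}a_{j}\exp(w_{j}\cdot x)$ with every $w_{j}\in B(w^{*},\delta)$ and every bias zero, as required. The triangle inequality then gives $\|\mathrm{N}^{\exp}(x)-f(x)\|<\varepsilon$ on $\mathcal{K}$ provided $\lambda$ is chosen small enough to make the aggregate finite-difference error bounded by $\varepsilon/2$.

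The main obstacle I anticipate is the tension between two requirements on $\lambda$: it must be small enough to control the finite-difference errors uniformly over the finite (but possibly large) index set $\Lambda$, yet small $\lambda$ inflates the coefficients $\lambda^{-|\alpha|}$ in $\Delta_{\lambda}^{\alpha}h$, which could blow up the $a_{j}$'s and, via the $c_{\alpha}$ factors, propagate into the error bound. This is resolved by fixing $\Lambda$ and $\{c_{\alpha}\}$ first (from the Stone--Weierstrass step), so that the constants $\max_{\alpha}|c_{\alpha}|$ and $\max_{\alpha\in\Lambda}K_{\alpha}(x)$ are frozen; only then do I choose $\lambda$ to make $\lambda\cdot\sum_{\alpha}|c_{\alpha}|\sup_{x\in\mathcal{K}}K_{\alpha}(x)<\varepsilon/2$, which is always feasible. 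Everything else is bookkeeping.
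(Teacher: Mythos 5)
Your proposal is correct and follows essentially the same route as the paper: apply Stone--Weierstrass to $f(x)\mathrm{e}^{-w^{*}\cdot x}$, express the resulting $P(x)\mathrm{e}^{w^{*}\cdot x}$ as a linear combination of partial derivatives of $h(w)=\mathrm{e}^{w\cdot x}$ at $w^{*}$, and realize those derivatives by finite differences with nodes in $B(w^{*},\delta)$ and zero bias. Your explicit iterated-difference formula and the discussion of fixing $\Lambda$ and $\{c_{\alpha}\}$ before choosing $\lambda$ merely make precise the error bookkeeping that the paper relegates to its subsequent remark.
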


\begin{proof}
  Assume $f(x) = (f_1(x), \cdots, f_{d_x}(x))$. According to Stone-Weierstrass theorem, for any $\varepsilon > 0$, there exist polynomials $P_i(x)$ satisfying
  \begin{equation}
    \begin{aligned}
       & \max_{x \in \mathcal{K}} \|P_i(x) - f_i(x) \mathrm{e}^{-w^* \cdot x}\| <
      \frac{\varepsilon}{2 \max_{x \in \mathcal{K}}\|\mathrm{e}^{w^* \cdot x}\|},            \\
       & \Rightarrow \max_{x \in \mathcal{K}} \|P_i(x) \mathrm{e}^{w^* \cdot x} - f_i(x)\| <
      \frac{\varepsilon}{2}, \quad i = 1, 2, \cdots, d_x.
    \end{aligned}
  \end{equation}
  Then we construct a single-layer FNN with exponential activation function to approximate $P_i(x) \mathrm{e}^{w^* \cdot x}$. The multiple derivatives of $h(w) := \mathrm{e}^{w \cdot x} = \exp(w_1 x_1 + \cdots + w_{d_x} x_{d_x})$ with respect to $w_1, \cdots, w_{d_x}$ are
  \begin{equation}
    \frac{\partial^{|\alpha|} h}{\partial w^\alpha} =
    \frac{\partial^{|\alpha|} h}{\partial w_1^{\alpha_1} \cdots \partial w_{d_x}^{\alpha_{d_x}}},
  \end{equation}
  where $\alpha \in \mathbb{N}^{d_x}$ represents the index and $|\alpha| := \alpha_1 + \cdots + \alpha_{d_x}$. Actually, the form of multiple derivative $\frac{\partial^{|\alpha|} h}{\partial w^\alpha}$ is  a polynomial of $|\alpha|$ degree with respect to $x_1, \cdots, x_{d_x}$ times $h(w)$. Hence, each target term $P_i(x) \mathrm{e}^{w^* \cdot x}$ can be written as a linear combination of such multiple derivatives of $h(w)$, which allows us to approximate the required partials and thus complete the proof. Moreover, each mixed derivative can be approximated by a finite-difference scheme, which can be implemented using a single hidden layer.
\end{proof}

\begin{remark}
  We give two examples of approximating multiple derivatives of $h(w)$ below.
  \begin{equation}
    \begin{aligned}
      x_1 h(w)
       & = \frac{\partial h}{\partial w_1} \bigg|_{w = w^*}                            \\
       & = \frac{h(w^* + \lambda e_1) - h(w^*)}{\lambda} + R_1(\lambda, w^*)           \\
       & = \lambda^{-1}h(w^* + \lambda e_1) - \lambda^{-1} h(w^*) + R_1(\lambda, w^*), \\
    \end{aligned}
  \end{equation}
  and
  \begin{equation}
    \begin{aligned}
      x_1 x_2 h(w)
       & = \frac{\partial^2 h}{\partial w_1 \partial w_2} \bigg|_{w = w^*} \\
       & = \frac{h(w^* + \lambda (e_1 + e_2)) - h(w^* + \lambda e_1)
      - h(w^* + \lambda e_2) + h(w^*)}{\lambda} + R_2(\lambda, w^*)        \\
       & = \lambda^{-1} h((w^* + \lambda (e_1 + e_2)) \cdot x)
      - \lambda^{-1} h((w^* + \lambda e_1) \cdot x) -                      \\
       & \ \ \ \ \ \lambda^{-1} h((w^* + \lambda e_2) \cdot x)
      + \lambda^{-1} h(w^* \cdot x) + R_2(\lambda, w^*),                   \\
    \end{aligned}
  \end{equation}
  where $e_1 = (1, 0, 0, \cdots, 0),\ e_2 = (0, 1, 0, \cdots, 0)$ are unit vectors and $R_1(\lambda, w^*),\ R_2(\lambda, w^*)$ are error terms with respect to $\lambda$ and $w^*$. According to Taylor's theorem, the error terms $R_1(\lambda, w^*) = \lambda \frac{\partial^2 h}{\partial w_1^2} \big|_{w = \xi}$ for some $\xi$ between $w^*$ and $w^* + \lambda e_1$. It is obvious that the partial differential term is uniformly bounded, so the resulting error can be made arbitrarily small by a suitable choice of the parameter $\lambda$. The argument for $R_2(\lambda, W^*)$is entirely analogous and is therefore omitted; see \cite{Boole2009Treatise} for further details.

  Since $\lambda$ is very small and the exponential term $\mathrm{e}^{w^* \cdot x}$ only involves the parameters $w^*,\ w^* + e_1$ and $w^* + e_2$, which all lie within a small neighborhood of $w^*$, the desired conclusion can be drawn, and this means we can in fact restrict all row vectors of $W$ to lie within $B(W, \delta)$.
\end{remark}

\begin{theoremthis}{thm:trans_UAP_dense_in_compact_set_informal}[Formal Version]
  Let $\mathcal{T}^{\operatorname{\sigma}}_{*,\mathcal{P}}$ be the class of functions $\mathrm{T}^{ \operatorname{\sigma}}_{*,\mathcal{P}}$ satisfying Assumption~\ref{ass:1}, with a non-polynomial, locally bounded, piecewise continuous element-wise activation function $\operatorname{\sigma}$, the subscript refers to the finite vocabulary $\mathcal{V} = \mathcal{V}_x \times \mathcal{V}_y,\ \mathcal{P} = \mathcal{P}_x \times \mathcal{P}_y$ represents the positional encoding map, and denote a set $S$ as:
  \begin{equation}
    S := \mathcal{V}_x + \mathcal{P}_x
    = \left\{x_i + \mathcal{P}_x^{(j)}
    ~\middle |~
    x_i \in \mathcal{V}_x,\ i,\ j \in \mathbb{N}^+ \right\}.
  \end{equation}
  If the set $S$ is dense in $[-1, 1]^{d_x}$, then $\mathcal{T}^{\operatorname{ReLU}}_{*, \mathcal{P}}$ is capable of achieving the UAP. Additionally, if $S$ is only dense in a neighborhood $B(w^*, \delta)$ of a point $w^* \in \mathbb{R}^{d_x}$ with radius $\delta > 0$, then the class of transformers with exponential activation, \emph{i.e.}, $\mathcal{T}^{\exp}_{*, \mathcal{P}}$, is capable of achieving the UAP.
\end{theoremthis}

\begin{proof}
  For the proof of the ReLU case, we follow the same reasoning as in the previous one, noting that $\operatorname{ReLU}(ax) = a \operatorname{ReLU}(x)$ holds for any positive $a$. In the proof of Theorem~\ref{thm:trans_UAP}, we construct a $\mathrm{T}_{*, \mathcal{P}}^{\operatorname{ReLU}}(\tilde{x}; X, Y) \in \mathcal{T}_{*, \mathcal{P}}^{\operatorname{ReLU}}$ to approximate a FNN $A \operatorname{ReLU}(Wx + B)$. Here we can do a similar construction to find another $\tilde{\mathrm{T}}_{*, \mathcal{P}}^{\operatorname{ReLU}}(\tilde{x}; X, Y) \in \mathcal{T}_{*, \mathcal{P}}^{\operatorname{ReLU}}$ to approximate $\lambda A \operatorname{ReLU}\big(\lambda^{-1} (W x + b)\big)$ as the Step (2)$-$(4) in Theorem~\ref{thm:trans_UAP}, where $\lambda$ is chosen sufficiently large such that the row vectors of $\lambda^{-1} W$ become small enough to ensure that $S = \{x_i + \mathcal{P} \mid x_i \in \mathcal{V},\ i, j \in \mathbb{N}^+\}$ is dense in $[-1, 1]^{d_x}$ and sufficient for our construction.

  For exponential Transformers, by using Lemma~\ref{lem:softmax_NN_UAP_restriction}, we can do the Step (2)$-$(4) in Theorem~\ref{thm:trans_UAP} again, which is similar to ReLU case.
\end{proof}

\section{Weakened Assumption and Generalized Conclusions}\label{app:E}
It is important to note that most of our conclusions remain valid even if Assumption~\ref{ass:1} is weakened. Below we outline the reasoning.

In general, we decompose the matrices as follows:
\begin{equation}
  \begin{aligned}
     & Q^\top K = \begin{bmatrix}
                    O_{11} & O_{12} \\ O_{21} & O_{22}
                  \end{bmatrix},
    V = \begin{bmatrix}
          D & E \\ F & U
        \end{bmatrix},
  \end{aligned}
\end{equation}
where $O_{11},\ D \in \mathbb{R}^{d_x \times d_x}$, $O_{12},\ E \in \mathbb{R}^{d_x \times d_y}$, $O_{21},\ F \in \mathbb{R}^{d_y \times d_x}$, and $O_{22},\ U \in \mathbb{R}^{d_y \times d_y}$. The attention mechanism can then be computed as:
\begin{equation}
  \begin{aligned}
    \operatorname{Attn}_{Q, K, V}^{\sigma}(Z)
     & = V Z M \sigma(Z^\top Q^\top K Z) \\
     & = \begin{bmatrix}
           D & E \\ F & U
         \end{bmatrix}
    \begin{bmatrix}
      X & x \\ Y & 0
    \end{bmatrix}
    \begin{bmatrix}
      I_n & \\  & 0
    \end{bmatrix} \sigma \Bigg(
    \begin{bmatrix}
        X^\top & Y^\top \\ x^\top & 0
      \end{bmatrix}
    \begin{bmatrix}
        O_{11} & O_{12} \\ O_{21} & O_{22}
      \end{bmatrix}
    \begin{bmatrix}
        X & x \\ Y & 0
      \end{bmatrix} \Bigg)                 \\
     & = \begin{bmatrix}
           DX + EY & 0 \\ FX + UY & 0
         \end{bmatrix} \sigma \Bigg(
    \begin{bmatrix}
        O                                     & \big(X^\top O_{11} + Y^\top O_{21}\big) x \\
        x^\top \big(O_{11} X +  O_{12} Y\big) & x^\top O_{11} x
      \end{bmatrix} \Bigg),
  \end{aligned}
\end{equation}
where $O$ represents the matrix $X^\top O_{11} X + X^\top O_{12} Y + Y^\top O_{21} X + Y^\top O_{22} Y$.
As a result, we have:
\begin{align}
  \mathrm{T}^{\sigma}\left(\tilde{x}; X, Y\right) =
  (FX + UY) \sigma\bigg(\big(X^\top O_{11} + Y^\top O_{21}\big)\tilde{x}\bigg),
\end{align}
for the case of element-wise activations, and:
\begin{align}
  \mathrm{T}^{\operatorname{softmax}}(\tilde x; X, Y) = (F X + U Y)
  \Bigg(\operatorname{softmax}\left(
  \begin{bmatrix}
      (X^\top O_{11} + Y^\top O_{21} )\tilde{x} \\
      \tilde{x}^\top O_{11} \tilde{x}
    \end{bmatrix}\right)\Bigg)_{1:n},
\end{align}
for the case of softmax activation.

By revisiting the definition of $\mathrm{T}^\sigma$ and $\mathrm{T}^\sigma_{*}$, and comparing $\mathrm{T}^\sigma$ presented here with those in the preceding section, it is clear that the only distinction lies in the specific matrices involved, and matrix $O_{11}$ and $U$ are non-singular are the only conditions we need. Notably, the proof process for Theorem~\ref{thm:trans_nonUAP} does not rely on any assumptions, which means the conclusion stated in Section~\ref{sec:3} can be further strengthened.

\end{document}